\documentclass{article}






\usepackage{fullpage}
\usepackage[utf8]{inputenc} 
\usepackage[T1]{fontenc}    
\usepackage{pifont}
\usepackage{hyperref}       
\usepackage{url}            
\usepackage{booktabs}       
\usepackage{amsfonts}       
\usepackage{nicefrac}       
\usepackage{microtype}      
\usepackage{xcolor}         
\usepackage{enumerate}
\usepackage{enumitem}
\usepackage{multicol,lipsum}
\usepackage{amsmath,amsthm,amssymb,bm}
\usepackage{bbold}
\usepackage{subcaption}
\usepackage{caption}
\usepackage{algorithm,algorithmic}
\usepackage{natbib}
\usepackage{apalike}
\usepackage{times}
\usepackage{float}
\usepackage{graphicx}

\newtheorem{thm}{Theorem}
\newtheorem{defini}{Definition}
\newtheorem{lemma}{Lemma}

\newtheorem{assumption}{Assumption}

\newtheorem{claim}{Claim}

\DeclareMathOperator{\argmax}{argmax}

\newenvironment{itemize*}%
{\begin{itemize}[leftmargin=*,topsep=0pt]%
		\setlength{\itemsep}{0pt}%
		\setlength{\parskip}{0pt}}%
	{\end{itemize}}
\newenvironment{enumerate*}%
{\begin{enumerate}[leftmargin=*,topsep=0pt]%
		\setlength{\itemsep}{0pt}%
		\setlength{\parskip}{0pt}}%
	{\end{enumerate}}

\newcount\Comments  
\Comments=0   

\usepackage{color}
\definecolor{darkgreen}{rgb}{0,0.5,0}
\definecolor{purple}{rgb}{1,0,1}
\newcommand{\kibitz}[2]{\ifnum\Comments=1\textcolor{#1}{#2}\fi}

\newcommand{\yangyi}[1]  {\kibitz{purple}   {[YL: #1]}}

\title{Causal Bandits with Unknown Graph Structure}

%

\author{%
  Yangyi Lu\\
  Department of Statistics\\
  University of Michigan\\
  \texttt{yylu@umich.edu}
  \and Amirhossein Meisami \\
  Adobe Inc.\\
  \texttt{meisami@adobe.com}\\
  \and Ambuj Tewari\\
  Department of Statistics\\
  University of Michigan \\
  \texttt{tewaria@umich.edu}
}

\begin{document}
	
	\maketitle
	\begin{abstract}
	In causal bandit problems, the action set consists of interventions on variables of a causal graph.
	Several researchers have recently studied such bandit problems and pointed out their practical applications.
	However, all existing works rely on a restrictive and impractical assumption that the learner is given full knowledge of the causal graph structure upfront.
	In this paper, we develop novel causal bandit algorithms without knowing the causal graph.
	Our algorithms work well for causal trees, causal forests and a general class of causal graphs.
	The regret guarantees of our algorithms greatly improve upon those of  standard multi-armed bandit (MAB) algorithms under mild conditions.
	Lastly, we prove our mild conditions are necessary: without them one cannot do better than standard MAB algorithms.
\end{abstract}

	
	\section{Introduction}
	\label{sec:intro}
A multi-armed bandit (MAB) problem is one of the classic models of sequential decision making~\citep{auer2002finite, agrawal2012analysis, agrawal2013further}. 
Statistical measures such as regret and sample complexity measure how fast learning algorithms achieve near optimal performance in bandit problems. However, both regret and sample complexity for MAB problems necessarily scale with the number of actions without further assumptions. 
To address problems with a large action set, researchers have studied various types of structured bandit problems where additional assumptions are made on the structure of the reward distributions of the various actions.
Algorithms for structured bandit problems exploit the dependency among arms to reduce the regret or sample complexity. Examples of structured bandit problems include linear bandits~\citep{abbasi2011improved,agrawal2013thompson}, sparse linear bandits~\citep{abbasi2012online}, and combinatorial bandits~\citep{cesa2012combinatorial,combes2015combinatorial}. 

In this paper, we study a different kind of structured bandit problems: \emph{causal bandits}. 
In this setting, actions are composed of interventions on variables of a causal graph. 
Many real world problems can be modeled via causal bandits.
In healthcare applications, the physician adaptively adjusts the dosage of multiple drugs to achieve some desirable clinical outcome~\citep{liu2020reinforcement}. 
In email campaign problems, marketers adjust for features of commercial emails to attract more customers and convert them into loyal buyers~\citep{lu2019regret,nair2021budgeted}. 
Genetic engineering also involves direct manipulation of one or more genes using biotechnology, such as changing the genetic makeup of cells to produce improved organisms~\citep{enwiki:1018922014}. 
Recently there has been a flurry of works~\citep{lattimore2016causal,sen2017identifying,lee2018structural,lu2019regret,lee2019structural,nair2021budgeted} on causal bandits that show how to achieve simple regret or cumulative regret not scaling with the action set size.

However, a major drawback of existing work is that they require significant prior knowledge. 
All existing works require that the underlying causal graph is given upfront.
Some regret analysis works even assume knowing certain probabilities for the causal model. 
In practice, they are all strong assumptions.

In this paper, our goal is to develop causal bandit algorithms that 1) do not require prior knowledge of the causal graph and 2) achieve stronger worst-case regret guarantees than non-causal algorithms such as Upper Confidence Bound (UCB) and Thompson Sampling (TS) whose regret often scales \emph{at least polynomially} with the number of nodes $n$ in the causal graph.
Unfortunately, this goal cannot be achieved for general causal graphs.
Consider the causal graph consists of isolated variables and the reward directly depends on one of them. 
Then in the worst case there is no chance to do better than standard algorithms since no meaningful causal relations among variables can be exploited.
In this paper, we study what classes of causal graphs on which we can achieve the goal.
\paragraph{Our Contributions.}
We summarize our contributions below.
\begin{enumerate}[leftmargin=*]
	\item We first study causal bandit problems where the unknown causal graph is a directed tree, or a causal forest.
	This setting has wide applications in biology and epidemiology ~\citep{greenewald2019sample,burgos2008two,kontou2016network,pavlopoulos2018bipartite}.
	We design a novel algorithm \textbf{C}entral \textbf{N}ode UCB (CN-UCB) that  \emph{simultaneously exploits} the reward signal and the tree structure to efficiently find the direct cause of the reward and then applies the UCB algorithm on a reduced intervention set corresponding to the direct cause.
	\item Theoretically, we show under certain identifiability assumptions, the regret of our algorithm only scales \emph{logarithmically} with the number of nodes $n$ in the causal graph.
	To our knowledge, this is the first regret guarantee for unknown causal graph that provably outperforms standard MAB algorithms.
	We complement our positive result with lower bounds showing the indentifiability assumptions are necessary.
	\item Furthermore, we generalize CN-UCB to a more general class of graphs that includes causal trees, causal forests, proper interval graphs, etc. 
	Our algorithm first constructs undirected clique (junction) trees and again simultaneously exploits the reward signal and the junction-tree structure to efficiently find the direct cause of the reward.
	We also extend our regret guarantees to this class of graphs.
\end{enumerate}
In many scenarios, our algorithms do \emph{not} recover the full underlying causal graph structure. 
Therefore, our results deliver the important conceptual message that \emph{exact causal graph recovery is not necessary in causal bandits} since the main target is to maximize the reward.

	\section{Related work}
	\label{sec:related}
	The causal bandit framework was proposed by~\citet{lattimore2016causal} and has been studied in various settings since then.
In the absence of confounders, \citet{lattimore2016causal} and \citet{sen2017identifying} studied the best arm identification problem assuming that the exact causal graph and the way interventions influence the direct causes of the reward variable are given. 
\cite{lu2019regret} and \cite{nair2021budgeted} proposed efficient algorithms that minimize the cumulative regret under the same assumptions. 
When confounders exist, \citet{lee2018structural, lee2019structural} developed effective ways to reduce the intervention set using the causal graph before applying any standard bandit algorithm, such as UCB. 
Even though the performance of above works improved upon that of standard bandit MAB algorithms, they all make the strong assumption of knowing the causal graph structure in advance.

In our setting, the underlying causal graph is not known. 
Then a natural approach is to first learn the causal graph through interventions.
There are many intervention design methods developed for causal graph learning under different assumptions~\citep{he2008active,hyttinen2013experiment,shanmugam2015learning,kocaoglu2017cost,lindgren2018experimental,greenewald2019sample,squires2020active}. 
However, this approach is not sample efficient because it is not necessary to recover the full causal graph in order to maximize rewards.
\yangyi{discussed the following paper, asked by one reviewer}
\citet{de2020causal} tackled this problem with unknown graph structure based on separating set ideas. However, this implicitly requires the existence of a set of non-intervenable variables that d-separate the interventions and the reward.
Moreover, their regret bound does not improve upon non-causal algorithms such as UCB.
In this paper, we take a different approach which uses the reward signal to efficiently learn the direct causes of the reward.

Our approach is inspired by~\citet{greenewald2019sample} which proposed a set of central node algorithms that can recover the causal tree structure within $O(\log n)$ single-node interventions. 
\citet{squires2020active} also extended the central node idea to learn a general class of causal graphs that involves constructing junction trees and clique graphs. 
Following these works, our causal bandit algorithms also adaptively perform interventions on central nodes to learn the direct causes of the reward variable.
However, our algorithms differ from theirs because we also take the reward signal into account.

	\section{Preliminaries}
	\label{sec:setup}
	In this section, we follow the notation and terminology of~\citet{lattimore2016causal,greenewald2019sample} for describing causal models and causal bandit problems. 

\subsection{Causal Models}
A causal model consists of a directed acyclic graph (DAG) $D$ over a set of random variables $\mathcal{X} = \{X_1,\ldots,X_n\}$ and a joint distribution $P$ that factorizes over $D$. 
The parents (children) of a variable $X_i$ on graph $D$, denoted by $\text{Pa}_{D}(X_i)$ (or $\text{Ch}_{D}(X_i)$), are the set of variables $X_j$ such that there is a directed edge from $X_j$ to $X_i$ (or from $X_i$ to $X_j$) on graph $D$. 
The set of ancestors (descendants) of a variable $X_i$, denoted by $\text{An}_D(X_i)$ (or $\text{De}_D(X_i)$), are the set of variables $X_j$ such that there is a path from $X_j$ to $X_i$ (or from $X_i$ to $X_j$) on $D$. 
Without loss of generality, we assume the domain set for every $X_i$ is $\text{Dom}(X_i) = [K]:=\{1,\ldots,K\}$.
For every $X_i$, we write the set of neighbors of $X_i$ in graph $D$ as $N_D(X_i)$ including variables $X_j$ such that there is an edge between $X_j$ and $X_i$ regardless of the direction.
The maximum degree of an undirected graph $G$ is denoted by $d_{\mathrm{max}}(G)$.
Throughout, we denote the true causal graph by $D$, use $V(\cdot)$ as the set of vertices of a graph and define $\mathrm{skeleton}(\cdot)$ as the undirected graph obtained by replacing the arrows in the directed graph with undirected edges.

\begin{defini}[Directed Tree and Causal Tree]
	\label{def:tree}
	A directed tree is a DAG whose underlying undirected graph is a tree and all its edges point away from the root. 
	A causal tree is a causal model whose underlying causal graph is a directed tree.
\end{defini}
For a node $X_i$ on a directed or undirected tree $D$ and its neighbor $Y\in N_D(X_i)$, we write $B_D^{X_i:Y}$ as the set of nodes that can be reached from $Y$ through any path on the graph (regardless of the directions of edges on the path), when the edge between $X_i$ and $Y$ is cut out from $D$.
Note that the neighbor $Y$ itself is always included in branch $B_D^{X_i:Y}$.

\begin{defini}[Central Node~\citep{greenewald2019sample}] \label{def:cnode}
	A central node $v_c$ of an undirected tree $\mathcal{T}$ with respect to a distribution $q$ over the nodes is one for which $\max_{j\in N_\mathcal{T}(v_c)} q(B_\mathcal{T}^{v_c:X_j})\leq 1/2$. At least one such $v_c$ is guaranteed to exist for any distribution $q$~\citep{jordan1869assemblages,greenewald2019sample}.
\end{defini}
Informally, a central node $v_c$ guarantees that the weight of every branch around $v_c$  cannot be larger than $1/2$ according to the distribution $q(\cdot)$.

\begin{defini}[Essential Graph]
	The class of causal DAGs that encode the same set of conditional independences is called the Markov equivalence class. 
	Denote the Markov equivalence class of a DAG $D$ by $[D]$. 
	The essential graph of $D$, denoted by $\mathcal{E}(D)$, has the same skeleton as $D$, with directed edges $X_i\rightarrow X_j$ if such edge direction between $X_i$ and $X_j$ holds for all DAGs in $[D]$, and undirected edges otherwise. 
\end{defini}

The chain components of $\mathcal{E}(D)$, denoted by $\text{CC}(\mathcal{E}(D))$, are the connected components of $\mathcal{E}(D)$ after removing all directed edges. 
Every chain component of $\mathcal{E}(D)$ is a chordal graph~\citep{andersson1997characterization}.
A DAG whose essential graph has a single chain component is called a \emph{moral} DAG~\citep{greenewald2019sample}.

\begin{defini}[Causal Forest~\citep{greenewald2019sample}]
\label{def:forest}
	A causal graph is said to be a causal forest if each of the undirected components of the essential graph are trees. 
\end{defini}
Many widely used causal DAGs including causal trees and bipartite causal graphs are examples of causal forest~\citep{greenewald2019sample}. 
Bipartite graph applications can range from biological networks, biomedical networks, biomolecular networks to epidemiological networks ~\citep{burgos2008two,kontou2016network,pavlopoulos2018bipartite}.

\subsection{Causal Bandit Problems}
In causal bandit problems, the action set consists of interventions~\citep{lattimore2016causal}. 
An intervention on node $X$ removes all edges from $\text{Pa}_{D}(X)$ to $X$ and results in a post-intervention distribution denoted by $P(\{X\}^c|\text{do}(X=x))$ over $\{X\}^c\triangleq\mathcal{X}\setminus \{X\}$.
An empty intervention is represented by $\mathrm{do}()$.
The reward variable $\mathbf{R}$ is real-valued and for simplicity, we assume the reward is only directly influenced by one of the variables in $\mathcal{X}$, which we call the \emph{reward generating variable} denoted by $X_R$~\footnote{For multiple reward generating variables cases, one can repeatedly run our proposed algorithms and recover each of them one by one. We discuss this setting in Section~\ref{sec:app_multi}}.
The learner does not know the identity of $X_R$.
Since there is only one $X_R$ in our setting, the optimal intervention must be contained in the set of single-node interventions as follows:
$\mathcal{A} = \{\text{do}(X = x)\mid X\in\mathcal{X}, x\in[K]\}$.
Thus, we focus on above intervention set with $|\mathcal{A}| = nK$ throughout the paper.

We denote the expected reward for intervention $a = \text{do}(X=x)$ by
$\mu_a= \mathbb{E}\left[\mathbf{R}|a\right]$. 
Then
$a^* = \argmax_{a\in \mathcal{A}}\mu_a$ is the optimal action and we assume that $\mu_a\in[0,1]$ for all $a$.
A random reward for $a = \text{do}(X=x)$ is generated by $\mathbf{R}|_a = \mu_a+\varepsilon$, where $\varepsilon$ is $1$-subGaussian.
At every round $t$, the learner pulls $a_t = \text{do}(X_t = x_t)$ based on the knowledge from previous rounds and observes a random reward $R_t$ and the values of all $X\in\mathcal{X}$.
The objective of the learner is to minimize the cumulative regret 
$R_T = T\mu_{a^*} - \sum_{t=1}^{T}\mu_{a_t}$
without knowing the causal graph $D$.

We make the following assumptions below. 
\begin{assumption}
	\label{assump:causal}
	The following three causal assumptions hold: 
	\begin{itemize*}
	    \item \emph{Causal sufficiency:} for every pair of observed variables, all their common causes are also observed.
	    \item \emph{Causal Markov condition:} every node in causal graph $G$ is conditionally independent of its nondescendents, given its parents.
	    \item \emph{Causal faithfulness condition:} 
	    the set of independence relations derived from Causal Markov condition is the exact set of independence relations.
	    \yangyi{Fixed the definition of causal faithfulness condition}
	\end{itemize*}
\end{assumption}
Assumption~\ref{assump:causal} is commonly made in causal discovery literature~\citep{peters2012identifiability,hyttinen2013experiment,eberhardt2017introduction,greenewald2019sample}.
Equivalently speaking, there is no latent common causes and the correpondence between d-separations and conditional independences is one-to-one. 

\begin{assumption}[Causal Effect Identifiability]\label{assump:bdd}
	There exists an $\varepsilon>0$, such that for any two variables $X_i\rightarrow X_j$ in graph $D$, we have $|P(X_j=x|\mathrm{do}(X_i=x'))-P(X_j=x)|>\varepsilon$ holds
	for some $x\in\text{Dom}(X_j), x'\in\text{Dom}(X_i)$.
\end{assumption}

Assumption~\ref{assump:bdd} is necessary. 
It states that if there is a direct causal relation between two variables on the graph, the causal effect strength cannot go arbitrarily small. 
A similar version of this assumption was also made by~\citet{greenewald2019sample}) in their causal graph learning algorithms.
We show the necessity of this assumption in Section~\ref{sec:lower}.
Intuitively, without this assumption, any causal relation among variables cannot be determined through finite intervention samples and $\Omega(\sqrt{nKT})$ worst-case regret is the best one can hope for.

\begin{assumption}[Reward Identifiability]
	\label{assump:reward_gap}
	We assume that for all $X\in\text{An}(X_R)$, there exists $x\in[K]$ such that
	$|\mathbb{E}[\mathbf{R}|\mathrm{do}()] - \mathbb{E}[\mathbf{R}|\mathrm{do}(X = x)]|\geq\Delta$,
	for some universal constant $\Delta>0$.
\end{assumption}
Lastly, we show that Assumption~\ref{assump:reward_gap} is also necessary.
It guarantees a difference on the expected reward between the observations and after an intervention on an ancestor of $X_R$.
In Section~\ref{sec:lower}, we prove that the worst-case regret is again lower bounded by $\Omega(\sqrt{nKT})$ without this assumption. 

In the following sections, we describe our causal bandit algorithms that focus on intervention design to minimize regret.
Like most intervention design works, our algorithm take the essential graph $\mathcal{E}(D)$ and observational probabilities over $\mathcal{X}$ (denoted by $P(\mathcal{X})$) as the input, which can be estimated from enough observational data\footnote{Observational data is usually much more cheaper than interventional data~\citep{greenewald2019sample}.}~\citep{he2008active,greenewald2019sample,squires2020active}.

	\section{CN-UCB for trees and forests}
	\label{sec:tree}
	We start with introducing our algorithm CN-UCB (Algorithm~\ref{algo:general_causal_tree}) when the causal graph $D$ is a directed tree.
Before diving into details, we summarize our results as follows:
\begin{thm}[Regret Guarantee for Algorithm~\ref{algo:general_causal_tree}]
	\label{thm:general_causal_tree}
	Define $B = \max\left\{\frac{32}{\Delta^2}\log\left(\frac{8nK}{\delta}\right), \frac{2}{\varepsilon^2}\log\left(\frac{8n^2K^2}{\delta}\right)\right\}$ and $T_1 = KB(2+d)\log_2 n$, where $0<\delta<1$.
	Suppose we run CN-UCB with $T\gg T_1$ interventions in total and $T_2 := T-T_1$. 
	Then with probability at least $1-\delta$, we have
	\begin{align}
	R_T 
	= \widetilde{O}\left(K\max\left\{\frac{1}{\Delta^2}, \frac{1}{\varepsilon^2}\right\}d(\log n)^2+\sqrt{KT}\right),
	\end{align}
	where $d:=d_{\mathrm{max}}(\mathrm{skeleton}(D))$ and $\widetilde{O}$ ignores poly-log terms non-regarding to $n$.
\end{thm}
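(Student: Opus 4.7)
The plan is to split the horizon at $T_1 = KB(2+d)\log_2 n$ into an exploration phase (rounds $1,\ldots,T_1$) devoted to identifying the direct cause of $X_R$, and an exploitation phase (rounds $T_1+1,\ldots,T$) that runs UCB on the reduced intervention set, so that $R_T = R_{[1,T_1]} + R_{[T_1+1,T]}$. Since rewards lie in $[0,1]$, the exploration phase contributes at most $T_1$ to the regret. Substituting the definition of $B$ and absorbing the $\log(nK/\delta)$ and $\log(n^2K^2/\delta)$ factors into $\widetilde{O}$ produces the first term $\widetilde{O}(K\max\{1/\Delta^2,1/\varepsilon^2\}d(\log n)^2)$, where one $\log n$ comes from the $\log_2 n$ central-node iterations and the other is hidden inside $B$.

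For the exploitation phase, the plan is to show that with probability at least $1-\delta$ the exploration phase correctly returns a variable $Z$ which is the direct cause of $X_R$ (either $X_R$ itself or its unique parent on the tree). Conditioned on that event, CN-UCB runs a standard UCB over the reduced action set $\{\mathrm{do}(Z = z): z\in[K]\}$. Because there is a unique reward generating variable, the globally optimal intervention $a^*$ must lie in this reduced set, so the classical UCB regret bound gives $R_{[T_1+1,T]} = \widetilde{O}(\sqrt{KT_2}) = \widetilde{O}(\sqrt{KT})$. Adding the two contributions yields the claimed regret.

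The core work is establishing the $1-\delta$ success probability of exploration, which I would prove inductively over the central-node iterations. At each iteration we maintain an active subtree $\mathcal{T}$ known to contain the target direct cause, pick a central node $v_c$ of $\mathcal{T}$ with respect to the restricted observational distribution (Definition~\ref{def:cnode}), and take $B$ samples each from $\mathrm{do}()$, from $\mathrm{do}(v_c = k)$ for $k\in[K]$, and from $\mathrm{do}(Y = k)$ for each of the at most $d$ neighbors $Y\in N_{\mathcal{T}}(v_c)$; this is what the $(2+d)$ factor in $T_1$ accounts for. By Hoeffding's inequality applied to the reward average and to the plug-in estimators of $P(X_j = x\mid \mathrm{do}(\cdot))$, the choice of $B$ ensures that both the reward-gap test at threshold $\Delta/2$ (Assumption~\ref{assump:reward_gap}) and the causal-effect test at threshold $\varepsilon/2$ (Assumption~\ref{assump:bdd}) are correct at that iteration with failure probability at most $\delta/\log_2 n$. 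The central node property forces every branch around $v_c$ to have observational mass at most $1/2$, so recursing into the correct branch shrinks the candidate set by a factor of at least two, and $\log_2 n$ iterations suffice; a union bound across iterations then controls the total failure probability by $\delta$.

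The hardest step, and the one I would spend most of the argument on, is verifying that the two tests jointly localize the direct cause at every iteration. Assumption~\ref{assump:reward_gap} tells us which branch at $v_c$ still contains an ancestor of $X_R$ (since intervening on any such ancestor shifts the reward mean by at least $\Delta$ relative to $\mathrm{do}()$), while Assumption~\ref{assump:bdd} certifies which edge incident to $v_c$ actually lies on the directed path toward $X_R$ (since a true directed edge produces an $\varepsilon$-shift in the marginal of the child). One must handle separately the case that $v_c$ is itself an ancestor of $X_R$, in which case the reward-gap test on $v_c$ triggers and we recurse through the child identified by the causal-effect test, versus the case where some neighbor carries the reward signal. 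Ensuring that these two signals are mutually consistent on the good concentration event, and that the tree-recursion invariant (the active subtree always contains the direct cause) is preserved throughout, is the main technical obstacle; the rest of the theorem is bookkeeping.
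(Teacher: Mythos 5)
Your proposal follows essentially the same route as the paper: bound the exploration regret by $T_1$ using $\mu_a\in[0,1]$, establish via Hoeffding's inequality and a union bound a good event of probability $1-\delta$ on which the $\Delta/2$ reward-gap test and the $\varepsilon/2$ causal-effect test are both correct so that stage 2 outputs $X_R$, use the central-node halving property to get the $O(\log_2 n)$ iteration count with at most $(2+d)K$ interventions per iteration, and invoke the standard $K$-armed UCB bound $\widetilde{O}(\sqrt{KT})$ for the remaining rounds. One small imprecision: the identified node must be $X_R$ itself rather than possibly its parent (as your parenthetical suggests), since otherwise $a^*$ need not lie in the reduced action set $\{\mathrm{do}(Z=z):z\in[K]\}$ — but this is exactly what the paper's stage 2 guarantees on the good event, so the argument goes through as you outline it.
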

From above results, we see that especially for large $n$ and small maximum degree $d$, Algorithm~\ref{algo:general_causal_tree} outperforms the standard MAB bandit approaches that incur $\widetilde{O}(\sqrt{nKT})$ regret.

\subsection{Description for CN-UCB}
Our method contains three main stages. 
There is no v-structure in a directed tree, so the essential graph obtained from observational data is just the tree skeleton $\mathcal{T}_0$ and we take it as our input in Algorithm~\ref{algo:general_causal_tree}.
In stage 1, Algorithm~\ref{algo:general_causal_tree} calls Algorithm~\ref{algo:find_subtree} to find a directed sub-tree $\widetilde{\mathcal{T}}_0$ that contains $X_R$ by performing interventions on the central node (may change over time) on the tree skeleton.
In stage 2, Algorithm~\ref{algo:general_causal_tree} calls Algorithm~\ref{algo:find_key_node} to find the reward generating node $X_R$ by central node interventions on $\widetilde{\mathcal{T}}_0$.
We prove that $X_R$ can be identified correctly with high probability from the first two stages, so a UCB algorithm can then be applied on the reduced intervention set $\mathcal{A}_R = \{\text{do}(X_R = k)\mid k=1,\ldots,K\}$ for the remaining rounds in stage 3. 
In the remainder of this section, we explain our algorithm design for each stage in detail and use an example in Figure~\ref{fig:CN_UCB} to show how CN-UCB proceeds step by step.

\begin{figure}[H]
    \centering
    \begin{subfigure}{0.3\textwidth}
    \includegraphics[width=\textwidth]{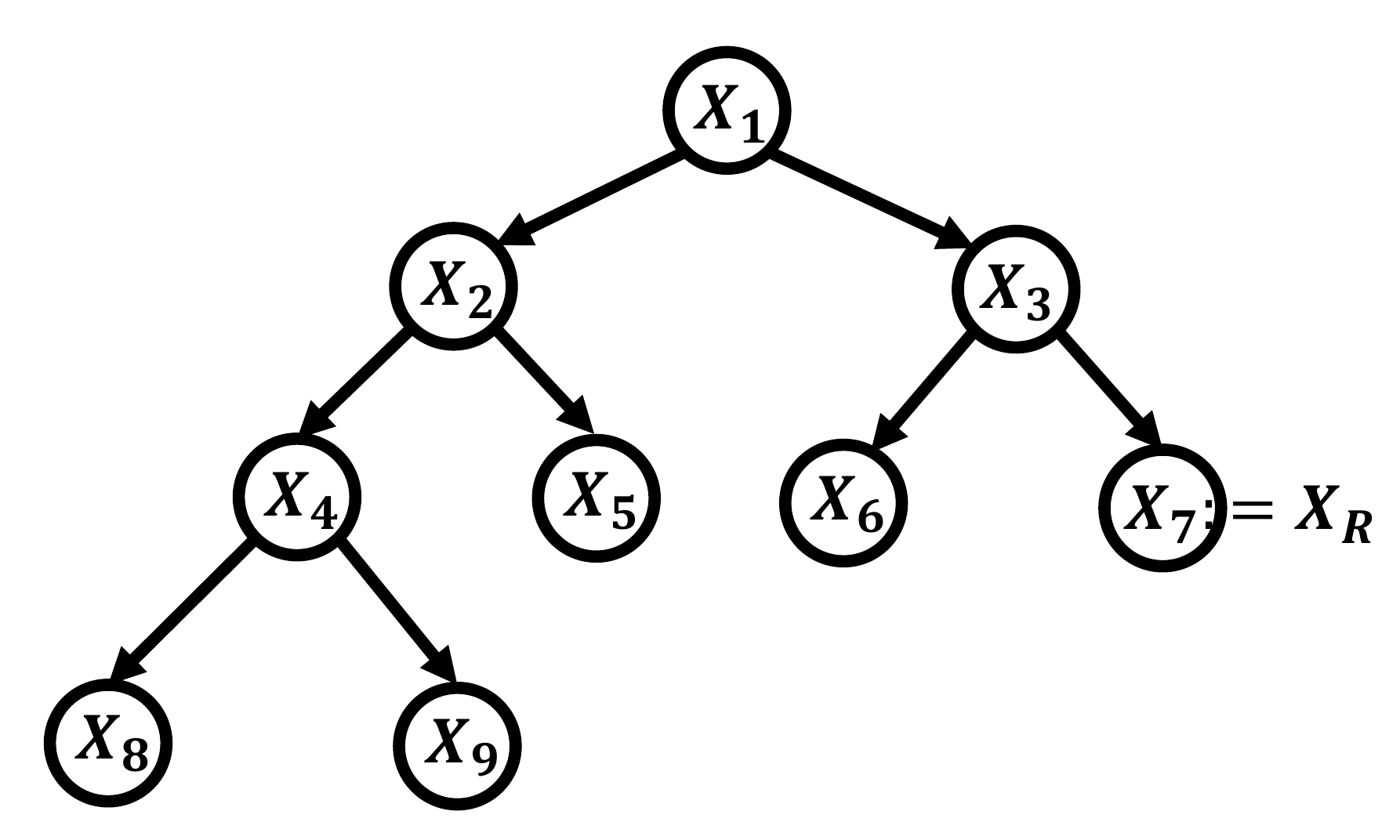}
    \caption{True causal tree.}
    \label{fig:true}
    \end{subfigure}
    ~
    \begin{subfigure}{0.3\textwidth}
    \includegraphics[width=\textwidth]{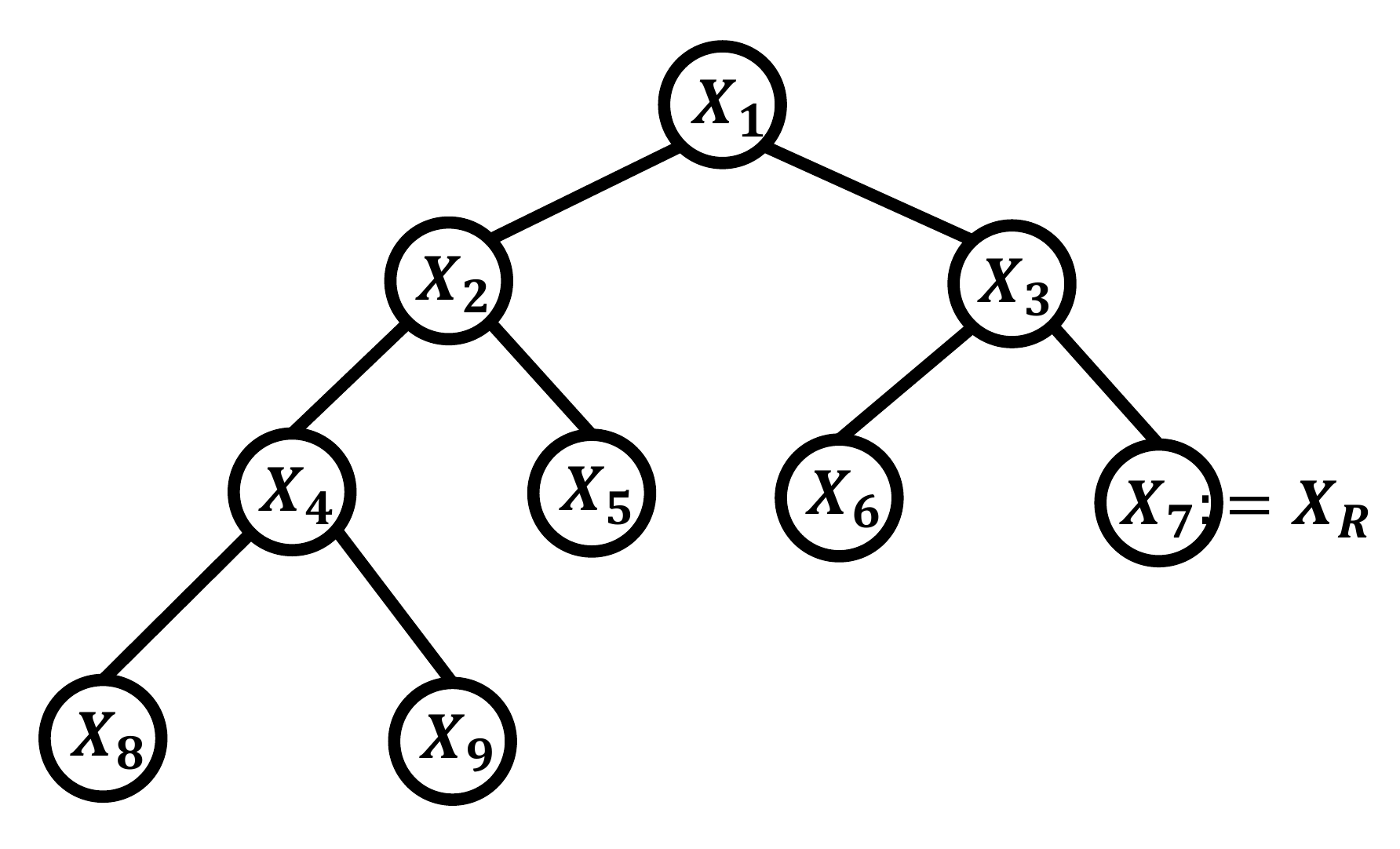}
    \caption{Input tree skeleton.}
    \label{fig:skeleton}
    \end{subfigure}
    ~
    \begin{subfigure}{0.3\textwidth}
    \includegraphics[width=\textwidth]{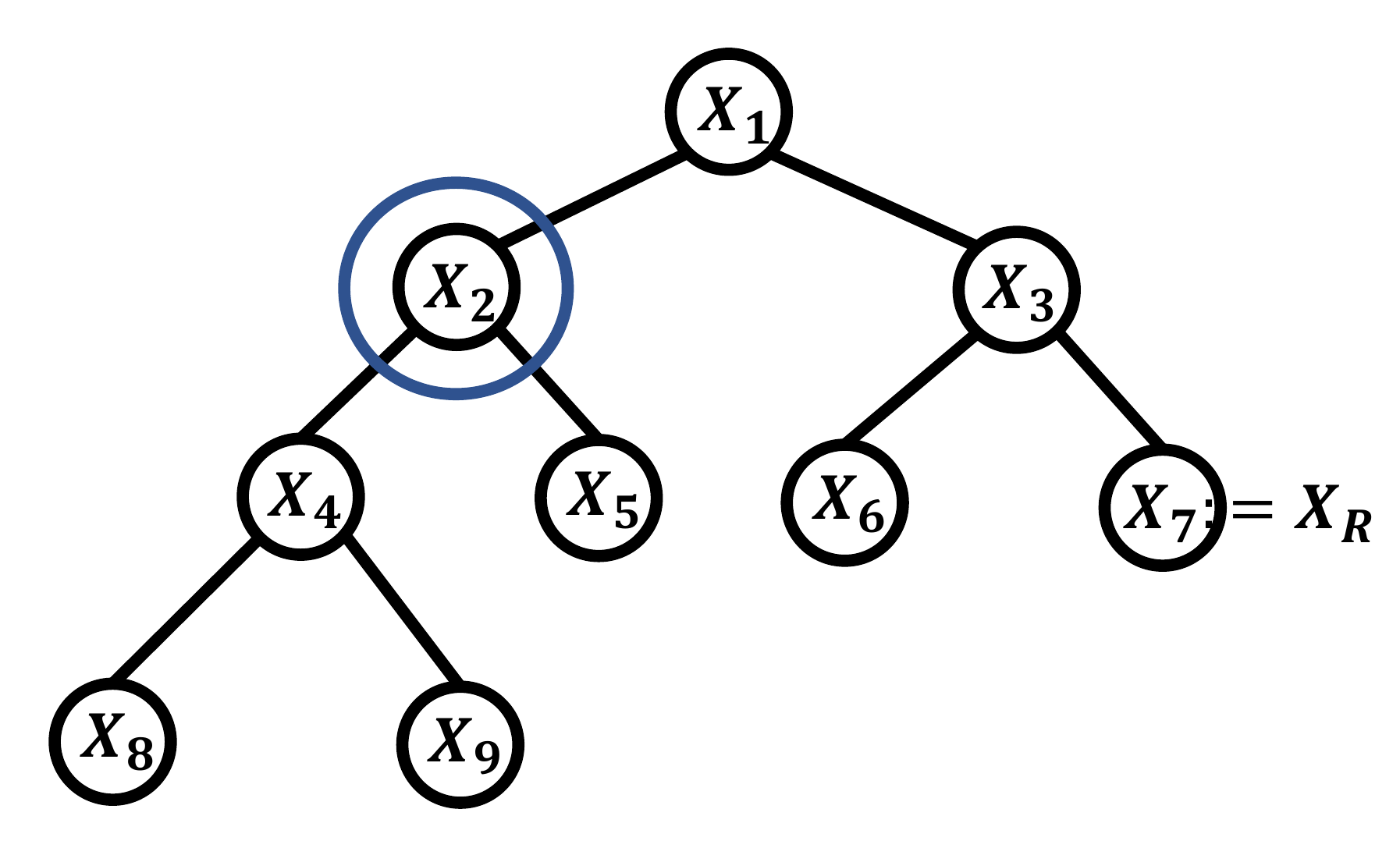}
    \caption{Stage 1 ($t=0$): intervention on central node $X_2$.}
    \label{fig:stage1_t0}
    \end{subfigure}
    ~
    \begin{subfigure}{0.3\textwidth}
    \includegraphics[width=\textwidth]{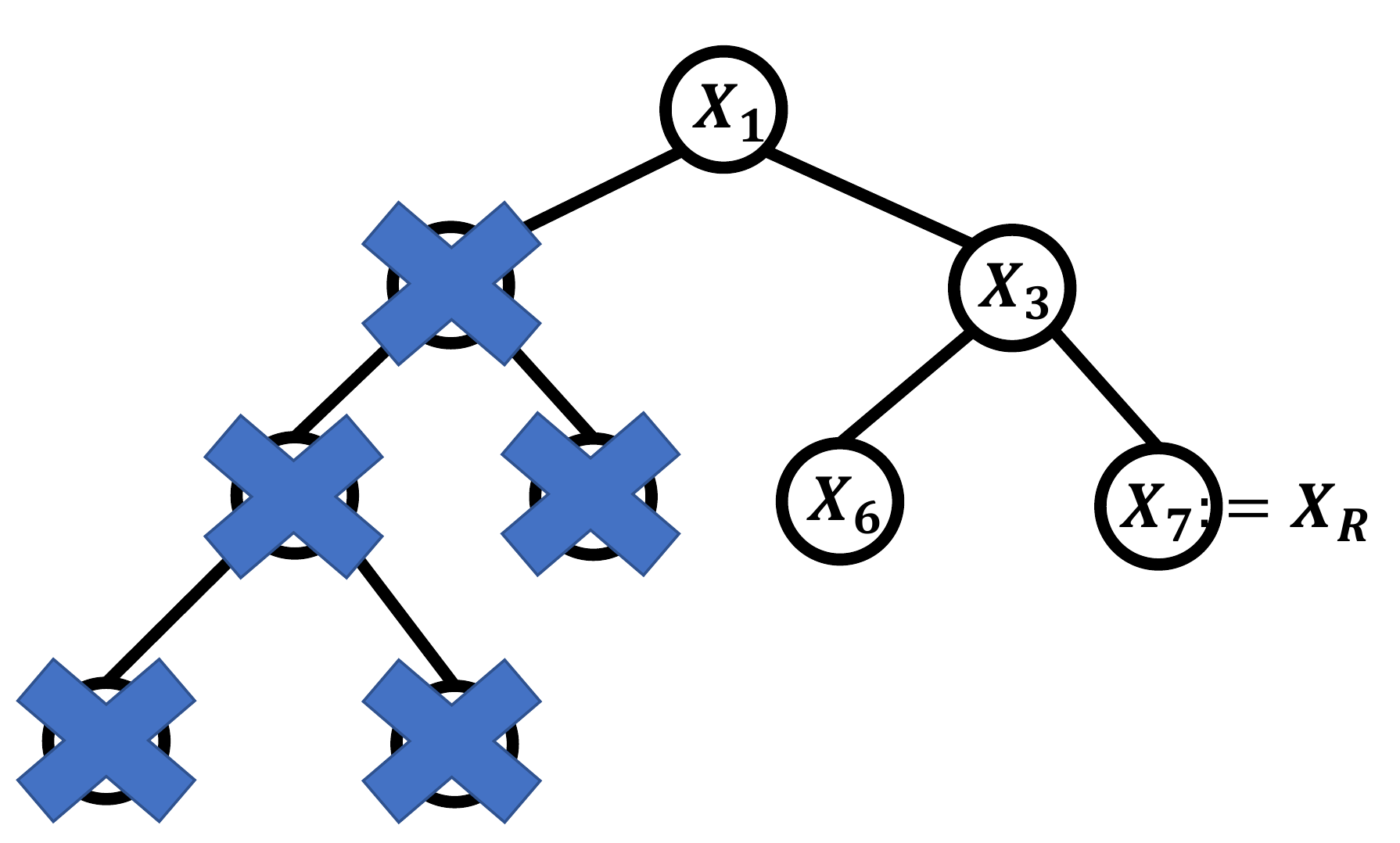}
    \caption{Stage 1 ($t=0$) intervention result.}
    \label{fig:stage1_t0_res}
    \end{subfigure}
    ~
    \begin{subfigure}{0.3\textwidth}
    \includegraphics[width=\textwidth]{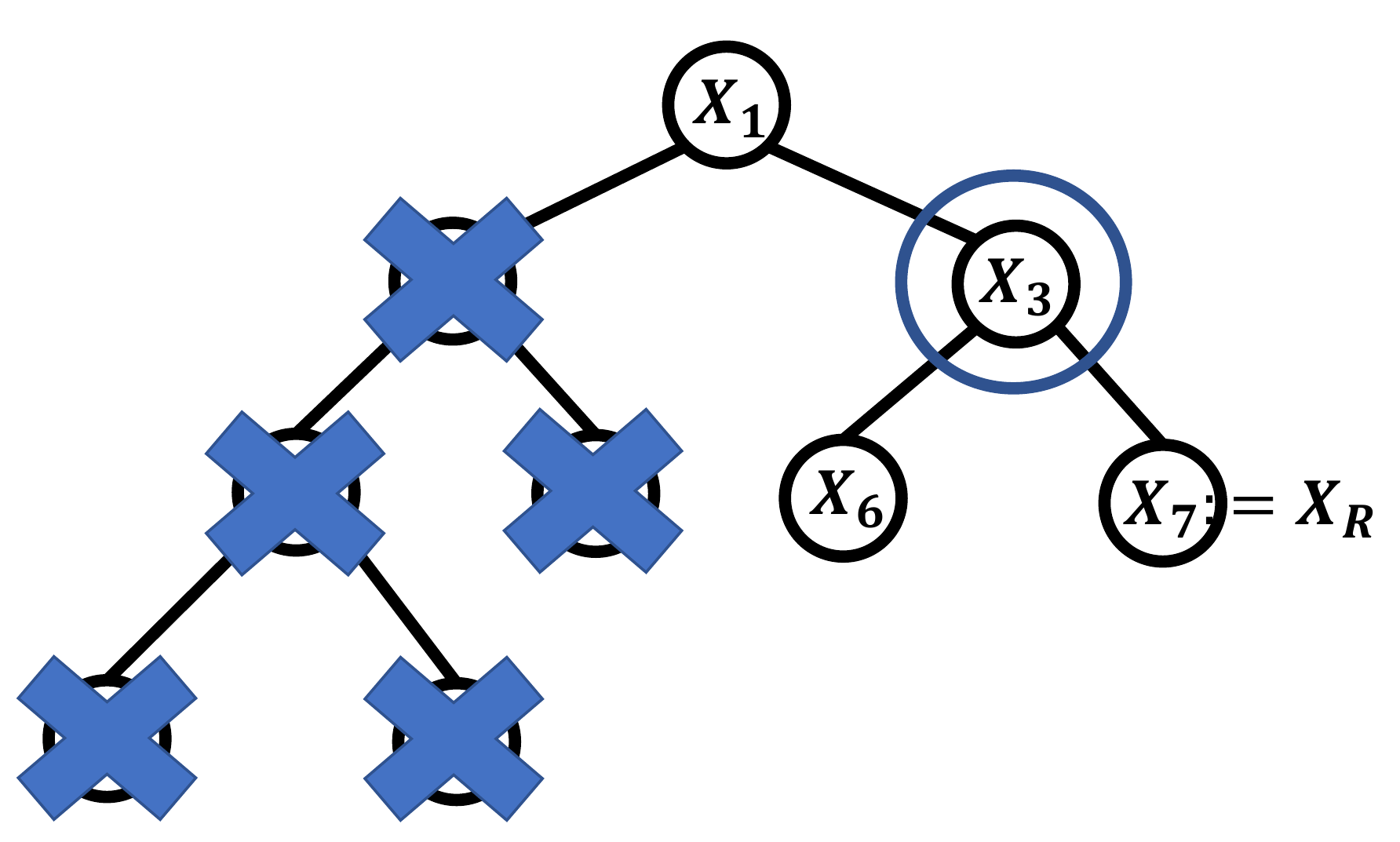}
    \caption{Stage 1 ($t=1$): intervention on central node $X_3$.}
    \label{fig:stage1_t1}
    \end{subfigure}
    ~
    \begin{subfigure}{0.3\textwidth}
    \includegraphics[width=\textwidth]{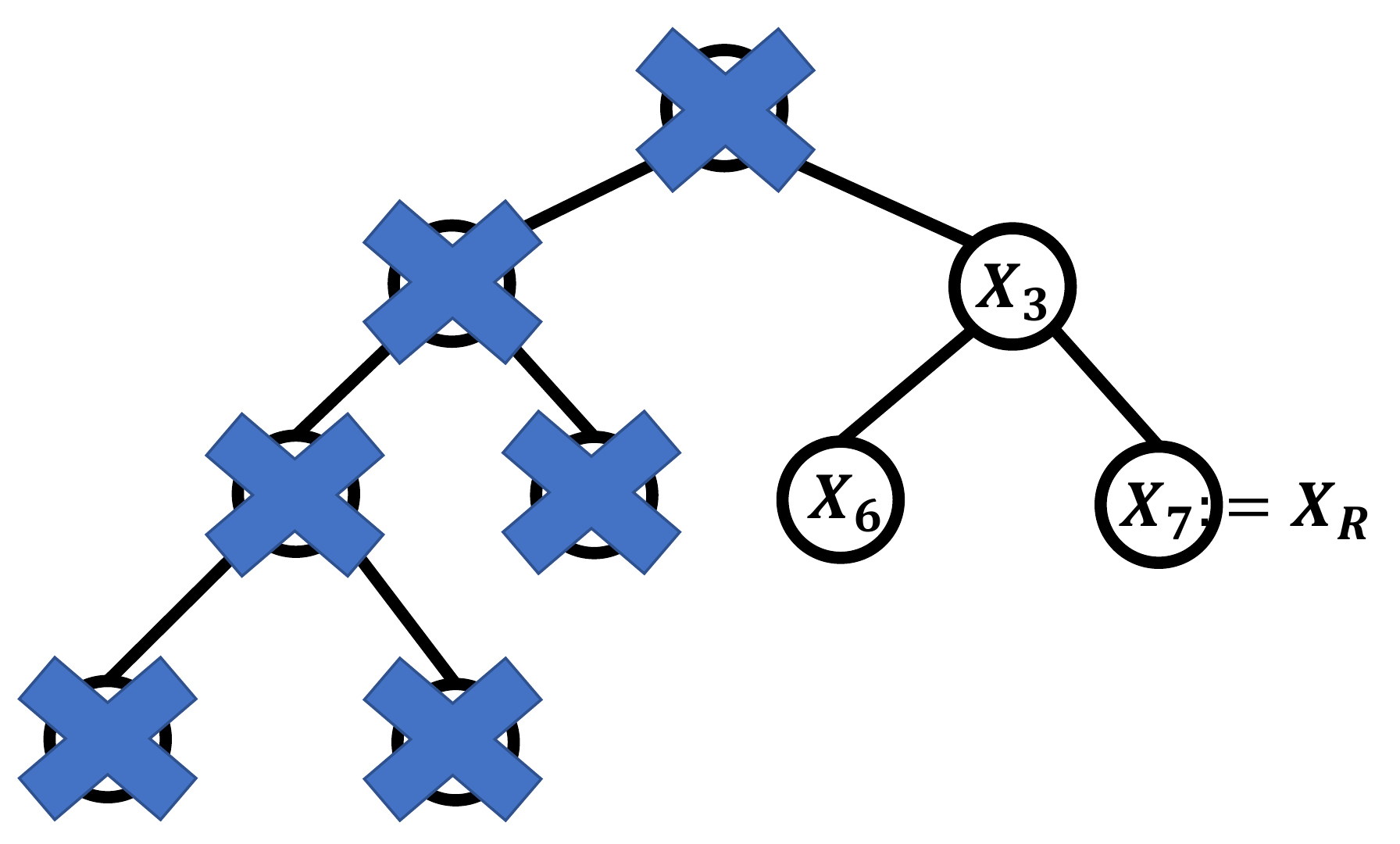}
    \caption{Stage 1 ($t=1$) intervention result.}
    \label{fig:stage1_t1_res}
    \end{subfigure}
    ~
    \begin{subfigure}{0.2\textwidth}
    \includegraphics[width=\textwidth]{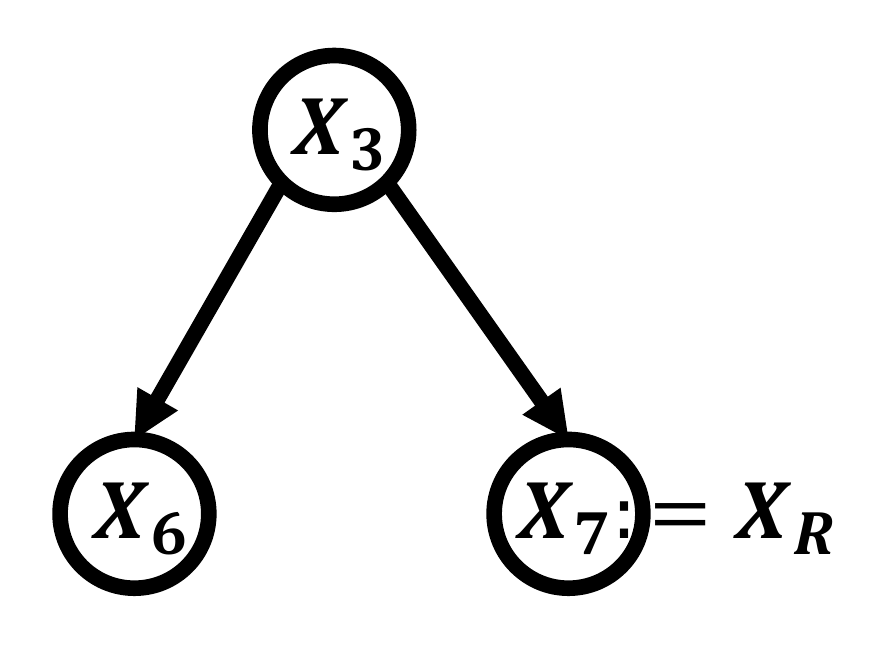}
    \caption{Stage 1 output: directed subtree.}
    \label{fig:stage1_output}
    \end{subfigure}
    ~
    \begin{subfigure}{0.2\textwidth}
    \includegraphics[width=\textwidth]{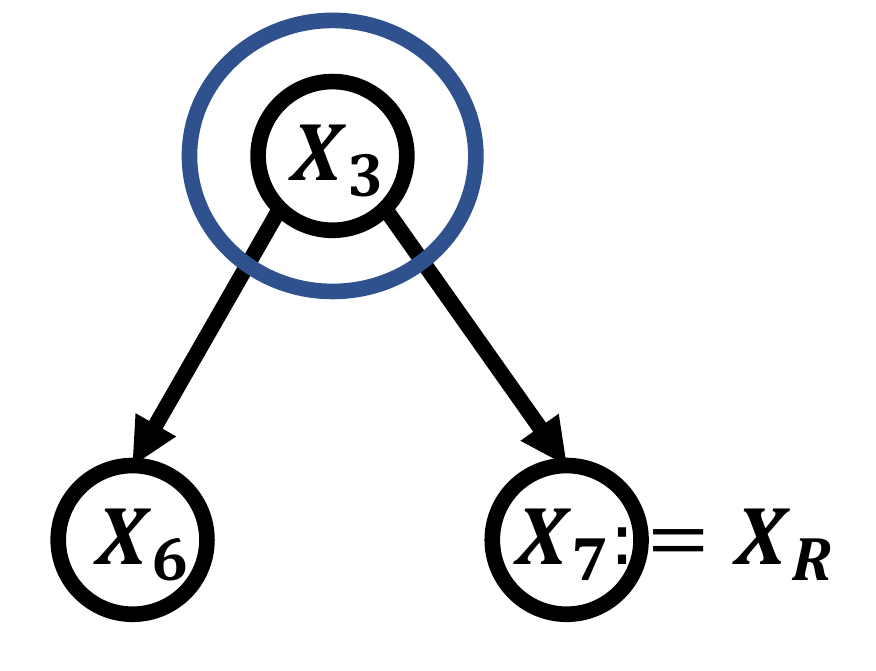}
    \caption{Stage 2 ($t=0$): intervention on central node $X_3$.}
    \label{fig:stage2_t0}
    \end{subfigure}
    ~
    \begin{subfigure}{0.2\textwidth}
    \includegraphics[width=\textwidth]{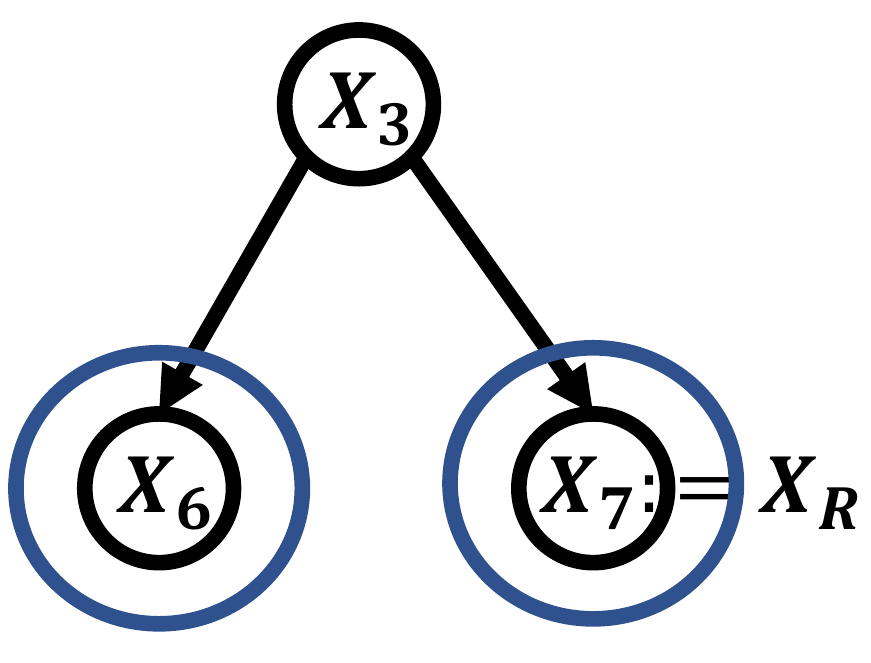}
    \caption{Stage 2 ($t=0$): interventions on the children of $X_3$.}
    \label{fig:stage2_t0_children}
    \end{subfigure}
    ~
    \begin{subfigure}{0.2\textwidth}
    \includegraphics[width=\textwidth]{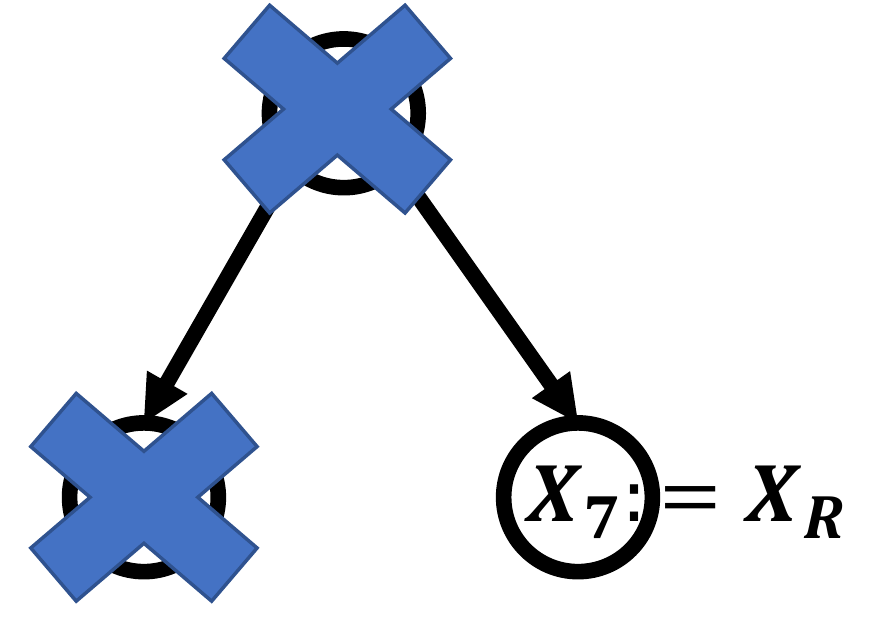}
    \caption{Stage 2 output: only $X_7$ remains.}
    \label{fig:stage2_output}
    \end{subfigure}
    \caption{Causal Tree Example for CN-UCB procedure.}
    \label{fig:CN_UCB}
\end{figure}

\begin{algorithm}[t]
	\centering
	\caption{Central Node Upper Confidence Bound (CN-UCB)}
	\begin{algorithmic}[1]
		\STATE \textbf{Input: } Tree skeleton $\mathcal{T}_0$, $K$, $\Delta,\varepsilon$, $B$, $T_2$, observational probabilities $P(\mathcal{X})$.
		\STATE Perform $\text{do}()$ for $B$ times, collect $R_1,\ldots,R_B$
		\STATE Obtain reward estimate for the empty intervention $\text{do}()$: $\hat{R}\leftarrow\frac{1}{B}\sum_{b=1}^{B}R_b$.
		\STATE \textbf{Stage 1: }Find a directed subtree that contains $X_R$: $\widetilde{\mathcal{T}}_0\leftarrow $ Find Sub-tree($\mathcal{T}_0, K, B, \varepsilon, \Delta, \hat{R}$).\\
		\hfill\textcolor{blue}{//call Algorithm~\ref{algo:find_subtree} in Section~\ref{sec:appendix_algo}.}
		\STATE \textbf{Stage 2: }Find the key node that generates the reward: $X_R\leftarrow$ Find Key Node($\widetilde{\mathcal{T}}_0, K, B, \Delta,\hat{R}$).\\
		\hfill\textcolor{blue}{//call Algorithm~\ref{algo:find_key_node} in Section~\ref{sec:appendix_algo}.}
		\STATE \textbf{Stage 3: }Apply UCB algorithm on $\mathcal{A}_R = \{\text{do}(X_R = k)\mid k=1,\ldots,K\}$ for $T_2$ rounds.
	\end{algorithmic}
	\label{algo:general_causal_tree}
\end{algorithm}

In the beginning of CN-UCB, we collect reward data under empty intervention to obtain the observational reward mean estimate $\hat{R}$, which will be used in later steps to check whether a variable is an ancestor of the true reward generating variable $X_R$ or not.

In our example, the true causal graph and its essential graph are displayed in Figure~\ref{fig:true} and~\ref{fig:skeleton}, where the true reward generating variable is $X_7$.
\paragraph{Stage 1. }The goal of this stage is to find a \emph{directed} subtree that contains $X_R$ within $O(\log n)$ single-node interventions.
We achieve this goal by adaptively intervening the central node that may change over time.
To illustrate our main idea, we make the following claim for single-node interventions.
\begin{claim}[Two outcomes from single-node interventions on variable $X$]
1) Whether $X$ is an ancestor of $X_R$ or not can be determined. 2) The directed subtree induced by $X$ as its root can be found. 
\end{claim}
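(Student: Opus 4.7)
The plan is to prove both conclusions by two applications of Hoeffding's inequality combined with Assumptions~\ref{assump:bdd} and~\ref{assump:reward_gap}, using the sample budget $B$ in the theorem statement to drive the concentration error below the identifiability gaps $\Delta$ and $\varepsilon$. For each $x \in [K]$ I would perform $B$ rounds of $\mathrm{do}(X = x)$, recording both the reward and the joint values of all variables, and reuse the $B$ observational rounds already taken at the start of CN-UCB to form $\hat{R}$ and the empirical marginals $\hat{P}(Y = y)$. The first regime in $B$ (with $1/\Delta^2$ and $\log(nK/\delta)$) then guarantees $|\hat{R}_x - \mathbb{E}[\mathbf{R}\mid\mathrm{do}(X = x)]| \leq \Delta/4$ uniformly in $x$ after a union bound, while the second regime (with $1/\varepsilon^2$ and $\log(n^2K^2/\delta)$) guarantees $|\hat{P}(Y = y \mid \mathrm{do}(X = x)) - P(Y = y \mid \mathrm{do}(X = x))| \leq \varepsilon/4$ uniformly in $(Y, y, x)$, each with total failure probability at most $\delta$.

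For Part~1, I would declare $X \in \mathrm{An}(X_R)$ iff $\max_x |\hat{R}_x - \hat{R}| \geq \Delta/2$. If $X$ is indeed an ancestor, Assumption~\ref{assump:reward_gap} supplies an $x^{\star}$ at which the true reward gap is at least $\Delta$, and the triangle inequality forces the empirical gap above $\Delta/2$. If $X$ is not an ancestor, then $X_R$ is not a descendant of $X$ in the tree, so the intervention $\mathrm{do}(X = x)$ leaves the marginal of $X_R$ (hence of $\mathbf{R}$) unchanged, every true gap is zero, and every empirical gap is at most $\Delta/2$. For Part~2, for each neighbor $Y$ of $X$ in the skeleton $\mathcal{T}_0$, I would declare the edge oriented $X \to Y$ iff $\max_{x,y} |\hat{P}(Y = y \mid \mathrm{do}(X = x)) - \hat{P}(Y = y)| \geq \varepsilon/2$. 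Assumption~\ref{assump:bdd} together with the concentration above gives the positive case, while the negative case follows because when $Y \to X$ in the true tree, $Y$ is not a descendant of $X$, so intervening on $X$ does not perturb its marginal. Orienting every edge incident to $X$ correctly in this way is exactly what it means to identify the directed subtree of $D$ with $X$ at its root, once the skeleton $\mathcal{T}_0$ is known.

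The main obstacle, and the only place where real care is needed, is the uniform control of the two thresholding tests. Because the orientation test ranges over every neighbor $Y$, every intervention value $x$, and every outcome value $y$, the union bound costs $O(n^2 K^2)$ events in Part~2 versus only $O(nK)$ in Part~1 --- which is precisely why the theorem's $B$ has two separate regimes. A secondary subtlety is that Assumption~\ref{assump:bdd} provides a quantitative causal gap only for direct edges, so Part~2 as stated identifies the orientation of edges incident to $X$ rather than the full descendant set in a single shot; this is nevertheless sufficient for Algorithm~\ref{algo:general_causal_tree}, since the central-node strategy then recurses into the child subtree known to contain $X_R$.
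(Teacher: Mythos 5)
Your proposal follows essentially the same route as the paper: Hoeffding concentration under the two regimes of $B$, a union bound over the $O(nK)$ reward events and the $O(n^2K^2)$ interventional-probability events, and threshold tests at $\Delta/2$ and $\varepsilon/2$ justified by Assumptions~\ref{assump:reward_gap} and~\ref{assump:bdd} in the positive direction and by the fact that an intervention does not perturb the marginal of a non-descendant in the negative direction (a soundness step you actually spell out more explicitly than the paper does). The one place you stop short is the end of Part~2: orienting only the edges incident to $X$ already determines the \emph{entire} directed subtree rooted at $X$ in one shot, because in a directed tree every node has at most one parent, so once $X\to Y$ is known the whole branch $B_{\mathcal{T}_0}^{X:Y}$ must be oriented away from $X$ --- the paper closes the argument with this propagation rather than by appealing to the algorithm's recursion, and the later stages do rely on the full subtree being oriented. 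A minor quantitative note: the paper treats $P(Y=y)$ as exact input, so a single $\varepsilon/2$ deviation for $\hat{P}(Y=y\mid\mathrm{do}(X=x))$ suffices with the stated $B$; if you instead estimate the observational marginals as well, the constant in the second regime of $B$ must be inflated to get your claimed $\varepsilon/4$ per term.
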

To see the first outcome, we obtain reward estimates under interventions over $X$. 
If the difference between any of the reward estimates and $\hat{R}$ is larger than a threshold, Assumption~\ref{assump:reward_gap} guarantees us that $X$ is an ancestor of $X_R$. 

To see the second outcome, we estimate interventional probabilities $\hat{P}(Y = y|\mathrm{do}(X = x))$ for all $Y\in N_{\mathcal{T}_0}(X)$ and $y,x\in[K]$. 
Comparing these quantities with the corresponding observational probabilities $P(Y = y)$, the directions of edges between $X$ and its neighbors can be determined due to Assumption~\ref{assump:bdd}.
Note that in a tree structure, at most one of the neighbors has causal effect on $X$ while others are all causally influenced by $X$.
Once the edges between $X$ and its neighbors are oriented, all other edges in the subtree induced by $X$ (as the root) can be oriented using the property that each variable has at most one parent in a tree structure. 
Thus, if we learn that $X$ is indeed an ancestor of $X_R$, a directed subtree can be obtained immediately.
Otherwise, we conclude that all the variables in the directed subtree induced by $X$ cannot be $X_R$. 


Then a key question is: how do we adaptively select which variables to intervene? 
Arbitrarily selecting variables may easily require $O(n)$ single-node interventions.
For example, let us consider a graph $X_1\rightarrow X_2\rightarrow\cdots\rightarrow X_n$ and the reward generating variable is $X_1$. 
If we start from intervening on $X_n$ and everytime move one-node backward towards intervening $X_1$, we need $O(n)$ single-node interventions in total to figure out the directed subtree containing $X_1$.
To overcome this issue, we adopt the idea of central node algorithm proposed by \citet{greenewald2019sample}.
Interventions on a central node $v_c(t)$ guarantees us to eliminate at least half of the nodes from future searching, because the directed subtree induced by $v_c(t)$ contains more than half of the nodes regarding to the current weighting function $q_t(\cdot)$. 
And as long as $v_c(t)$ is not found as an ancestor of $X_R$, the weights $q_{t+1}(\cdot)$ for variables in the directed subtree induced by $v_c(t)$ as the root will be set as zeros.
Thus, stage 1 can finish within $O(\log n)$ single-node interventions on $\mathcal{T}_0$.

For the example in Figure~\ref{fig:CN_UCB}, CN-UCB takes the tree skeleton in Figure~\ref{fig:skeleton} as its input and identifies $X_2$ as the central node.
As Figure~\ref{fig:stage1_t0_res} shows, the directed subtree induced by root $X_2$ are crossed out from the candidate set for $X_R$ since intervening on $X_2$ shows that $X_2$ is not an ancestor of the true $X_R$.
CN-UCB then identifies $X_3$ as the central node over the remaining variables(Figure~\ref{fig:stage1_t1}).
We will see $X_3$ is indeed an ancestor of $X_7$ from interventions on $X_3$.
Thus, $X_1$ can be crossed out because $X_1$ is on the upstream direction of $X_3$ and cannot be the true $X_R$ (Figure~\ref{fig:stage1_t1_res}).
Finally, stage 1 outputs the directed subtree in Figure~\ref{fig:stage1_output} whose root is $X_3$.
\paragraph{Stage 2. }
The goal of this stage is to identify $X_R$ in the directed sub-tree $\widetilde{\mathcal{T}}_0$ within $O(\log n)$ single-node interventions.
Similar to stage 1, we determine whether a node $X$ is an ancestor of $X_R$ by performing interventions on it. 
If we find a variable $X$ that is not an ancestor of $X_R$ according to its reward estimates, all nodes in the sub-tree induced by $X$ can be eliminated from future searching. 
Otherwise, we continue to intervene on the children of $X$. 
Since $X_R$ is unique, at most one child $Y\in\text{Ch}(X)$ is an ancestor of $X_R$. 
If such $Y$ exists, we repeat above procedure on the directed sub-tree induced by $Y$ as the root and update weights for all other variables as zeros in the algorithm.
Lastly, if none of the children $Y\in\text{Ch}(X)$ appears to be an ancestor of $X_R$, $X$ itself must be $X_R$.

We again perform intervention on the central node at the beginning of each round in Algorithm~\ref{algo:find_key_node} and intervene on its children if necessary.
By the definition of central node, every round we can either eliminate at least half of the nodes or finish searching for $X_R$. 
Thus, stage 2 can be finished within $O(d\log_2 n)$ single node interventions on $\widetilde{\mathcal{T}}_0$, since the central node at each round has at most $d$ children. 

For our example, stage 2 identifies $X_3$ as the central node showing in Figure~\ref{fig:stage2_t0}.
With high probability we can conclude $X_3$ is indeed an ancestor of $X_7:=X_R$, so stage 2 continues to intervene on its children $X_6$ and $X_7$ (Figure~\ref{fig:stage2_t0_children}).
From the intervention results we will see that $X_7$ is also an ancestor of the true $X_R$ so that $X_3$ and $X_6$ can be removed.
Finally, stage 2 outputs $X_7$ with high probability (Figure~\ref{fig:stage2_output}).
\paragraph{Stage 3. }
Once the first two stages output a variable, we simply apply the UCB algorithm over the reduced intervention set defined in Algorithm~\ref{algo:general_causal_tree}.

\subsection{Extension of CN-UCB to causal forest}
Generalizing CN-UCB to causal forest\footnote{Note that in our setting, causal forest refers to a type of causal graph. This should not be confused with the causal forest machine learning method which is similar to random forest.} defined in Definition~\ref{def:forest} is straightforward. 
The causal forest version for CN-UCB is presented in Algorithm~\ref{algo:forest} (Section~\ref{sec:appendix_algo}).
We simply run stage 1 and stage 2 of Algorithm~\ref{algo:general_causal_tree} for every chain component (tree structure) of the causal forest essential graph until stage 2 finds the reward generating variable $X_R$ with high probability. 
Then, a standard UCB algorithm can be applied on the reduced intervention set corresponding to $X_R$.
The regret guarantee for Algorithm~\ref{algo:forest} is presented in Theorem~\ref{thm:forest}.
\begin{thm}[Regret Guarantee for Algorithm~\ref{algo:forest}]
	\label{thm:forest}
	Define $C(D)$ as the number of chain components in $\mathcal{E}(D)$.
	Suppose we run Algorithm~\ref{algo:forest} (Section~\ref{algo:forest}) with $T\gg 2KB(d+C(D))\log_2 n:=T_1$ total interventions and $T_2 = T-T_1$, where $0<\delta<1$ and $B = \max\left\{\frac{32}{\Delta^2}\log\left(\frac{8nK}{\delta}\right), \frac{2}{\varepsilon^2}\log\left(\frac{8n^2K^2}{\delta}\right)\right\}$.
	Then under Assumptions~\ref{assump:causal},~\ref{assump:bdd} and~\ref{assump:reward_gap}, with probability at least $1-\delta$, we have
	\begin{align}
	R_T 
	= \widetilde{O}\left(K\max\left\{\frac{1}{\Delta^2}, \frac{1}{\varepsilon^2}\right\}(d+C(D))(\log n)^2+\sqrt{KT}\right),
	\end{align}
	where $\widetilde{O}$ ignores poly-log terms non-regarding to $n$ and $d:=\max_{\mathcal{T}_0\in\text{CC}(\mathcal{E}(D))}d_{\mathrm{max}}(\mathcal{T}_0)$.
\end{thm}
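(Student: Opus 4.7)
\textbf{Proof Proposal for Theorem~\ref{thm:forest}.}
The plan is to reduce to Theorem~\ref{thm:general_causal_tree}, running its analysis one chain component at a time. By Definition~\ref{def:forest} each chain component $\mathcal{T}_0 \in \text{CC}(\mathcal{E}(D))$ is an undirected tree, so within each component stages 1 and 2 of CN-UCB apply verbatim. I would begin by defining a global high-probability event on which all reward-mean and interventional-probability estimates used by the algorithm lie within their required tolerances ($\Delta/2$ and $\varepsilon/2$ respectively). The sample budget $B=\max\{(32/\Delta^2)\log(8nK/\delta),\,(2/\varepsilon^2)\log(8n^2K^2/\delta)\}$ is chosen precisely so that Hoeffding's inequality, combined with a union bound over the $O(n^2K^2)$ estimator comparisons executed anywhere in the graph, yields probability at least $1-\delta/2$ for this event. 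Crucially, because the logarithmic factors in $B$ already depend on the total node count $n$ (and not on per-component counts), a single choice of $B$ uniformly controls every comparison performed across all $C(D)$ chain components.

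On this good event I would verify correctness of the forest algorithm chain component by chain component. Within each $\mathcal{T}_0 \in \text{CC}(\mathcal{E}(D))$, stage 1 either identifies a directed subtree rooted at an ancestor of $X_R$ (when $X_R \in \mathcal{T}_0$) or certifies that no node of $\mathcal{T}_0$ is an ancestor of $X_R$ via Assumption~\ref{assump:reward_gap} (and the algorithm proceeds to the next component). The central-node halving property of Definition~\ref{def:cnode}, applied to the weighting function $q_t(\cdot)$ restricted to the currently active component, guarantees that each such pass uses at most $O(\log n)$ central-node interventions, each costing $KB$ samples, giving at most $2KB\log_2 n$ interventions per visited component. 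Summing over at most $C(D)$ components processed before the one containing $X_R$, and adding the $KBd\log_2 n$ cost of stage 2 incurred once on that component (as in Theorem~\ref{thm:general_causal_tree}), bounds the total stage-1+2 intervention count by $T_1 = 2KB(d+C(D))\log_2 n$. Hence with probability at least $1-\delta/2$ the algorithm correctly outputs $X_R$ after at most $T_1$ rounds.

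Finally I would decompose $R_T \le T_1 + R_{T_2}$. Since per-round reward lies in $[0,1]$ up to $1$-subGaussian noise, the stages 1+2 contribution is at most $T_1$. Conditional on correct identification of $X_R$, stage 3 is a standard $K$-arm UCB problem on $\mathcal{A}_R = \{\text{do}(X_R=k)\}_{k=1}^K$ over $T_2$ rounds, yielding cumulative regret $\widetilde{O}(\sqrt{KT_2})$ with probability at least $1-\delta/2$. Substituting the expression for $B$ into $T_1$ and taking a union bound gives the claimed $\widetilde{O}(K\max\{1/\Delta^2,1/\varepsilon^2\}(d+C(D))(\log n)^2 + \sqrt{KT})$ bound.

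The main obstacle is the bookkeeping across chain components: I need to argue that (i) the central-node halving guarantee is preserved when $q_t$ is restricted to only the remaining nodes of the currently active component, and (ii) a single global sample budget $B$ (with only $\log n$ dependence) suffices to uniformly control every adaptively chosen comparison across the $C(D)$ components. Both points follow because the central-node property is a purely per-component structural statement, while the union bound is indexed only by (variable, value) pairs whose total number is $O(n^2K^2)$, independent of $C(D)$.
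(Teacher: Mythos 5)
Your proposal is correct and follows essentially the same route as the paper: the paper's proof also reduces to Theorem~\ref{thm:general_causal_tree}, notes that in the worst case stage 1 is repeated for all $C(D)$ components before $X_R$ is located in the last one, sums the intervention counts to obtain $T_1$, and adds the standard $\widetilde{O}(\sqrt{KT\log T})$ UCB regret for stage 3 under the same global good event $E$ (whose union bound is indexed by node--value pairs and hence already covers all components with a single choice of $B$). Your write-up is more explicit about the per-component correctness and the uniformity of the sample budget, but no new idea is needed beyond what the paper uses.
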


	\section{Extension of CN-UCB to a general class of causal graphs}
	\label{sec:general_graph}
	In this section, we extend CN-UCB to more general causal graphs.
Our approach involves searching for the reward generating node $X_R$ within each undirected chain component of $\mathcal{E}(D)$.
In order to use the central node idea, we try to construct tree structures on each component.

\subsection{Preliminaries for the general graph class}
Before describing our algorithm, we first review definitions of clique (junction) tree and clique graph for an undirected graph $G$ following~\citet{squires2020active}.
\begin{defini}[Clique]
\label{def:clique}
	A clique $C\subset V(G)$ is a subset of nodes in which every pair of nodes are connected with an edge. 
	We say a clique $C$ is maximal if for any $v\in V(G)\setminus C$, $C\cup\{v\}$ is not a clique. 
	The maximal cliques set of $G$ is denoted by $\mathcal{C}(G)$ with its clique number defined as $\omega(G) = \max_{C\in\mathcal{C}(G)}|C|$, where $|C|$ denotes the number of nodes in clique $C$. 
	We use $|\mathcal{C}(G)|$ to denote the number of cliques in set $\mathcal{C}(G)$.
\end{defini}
\begin{defini}[Clique Tree (Junction Tree)]
\label{def:clique_tree}
	A clique tree $\mathcal{T}_G$ for $G$ is a tree with maximal cliques $\mathcal{C}(G)$ as vertices that satisfies the junction tree property, i.e. for all $v\in V(G)$, the induced subgraph on the set of cliques containing $v$ is a tree.
	We denote the set of clique trees for graph $G$ by $\mathcal{T}(G)$.
\end{defini}
\begin{defini}[Clique Graph]
\label{def:clique_graph}
	A clique graph $\Gamma_G$ is the graph union of $\mathcal{T}(G)$, i.e. $V(\Gamma_G) = \mathcal{C}(G)$ and $U(\Gamma_G) = \cup_{\mathcal{T}\in\mathcal{T}(G)}U(\mathcal{T})$, where $U(\cdot)$ denotes the set of undirected edges.
\end{defini}
\begin{defini}[Directed Clique Tree]
\label{def:DCT}
	A directed junction tree $\mathcal{T}_D$ of a moral DAG $D$ has the same vertices and adjacencies as a clique tree $\mathcal{T}_G$ of $G = \mathrm{skeleton}(D)$. 
	We use asterisks as wildcards for edge points, e.g. $X_i*\rightarrow X_j$ denotes either $X_i\rightarrow X_j$ or $X_i\leftrightarrow X_j$.
	For each ordered pair of adjacent cliques $C_1 *-* C_2$ we orient the edge mark of $C_2$ as:
	1) $C_1*\rightarrow C_2$ if for $\forall v_{12}\in C_1\cap C_2$ and $\forall v_2\in C_2\setminus C_1$, we have $v_{12}\rightarrow v_2$ in the DAG $D$ or 2) $C_1*- C_2$ otherwise.
\end{defini}

\begin{defini}[Directed Clique Graph]
\label{def:DCG}
	The directed clique graph $\Gamma_D$ of a moral DAG $D$ is the graph union of all directed clique trees of $D$.
	The procedure of graph union is the same as Definition~\ref{def:DCT}.
\end{defini}
\begin{defini}[Intersection Comparable~\citep{squires2020active}]
	\label{def:intersect}
	A pair of edges $C_1 - C_2$ and $C_2 - C_3$ on a clique graph are intersection comparable if $C_1\cap C_2\subseteq C_2\cap C_3$ or $C_1\cap C_2 \supseteq C_2\cap C_3$.
\end{defini}
Intersection-incomparable chordal graphs were introduced as ``uniquely representable chordal graphs" in~\citet{kumar2002clique}.
This class includes many familiar graph classes such as causal trees, causal forests and proper interval graphs~\citep{squires2020active}.
A clique graph is said to be intersection-incomparable if above intersection-comparable relation does not hold for any pair of edges.
\yangyi{Restate below sentences. Because Algo 2 now can take arbitrary general graph as input, but only when intersection-incomparable assumption satisfies, we have improved regret guarantee. (This revision was suggested by reviewers.)}
In this section, we design our algorithm for general causal graphs. 
If for every chain component $G\in\text{CC}(\mathcal{E}(D))$, the corresponding clique graph $\Gamma_G$ is intersection-incomparable, we prove that our algorithm can achieve improved regret guarantee.


\begin{algorithm}[tbp]
	\centering
	\caption{CN-UCB for General Causal Graph}
	\begin{algorithmic}[1]
		\STATE \textbf{Input: }essential graph $\mathcal{E}(D), K, \Delta,\varepsilon, B, T_2$, observational probabilities $P(\mathcal{X})$.
		\STATE \textbf{Initialize: }$\text{found}\leftarrow\text{False}$
		\STATE Perform $\text{do}()$ for $B$ times, collect reward data $R_1,\ldots,R_B$;  $\hat{R}\leftarrow\frac{1}{B}\sum_{b=1}^{B}R_b$.
		\FOR{$G\in\text{CC}(\mathcal{E}(D))$}
		\STATE \textbf{Stage 1: }Create a junction tree $\mathcal{T}_G$ and find a directed sub-junction-tree that contains $X_R$: $\widetilde{\mathcal{T}}_{G}\leftarrow$ Find Sub-junction-tree($G, \mathcal{T}_G, K, B, \varepsilon, \Delta, \hat{R}$). 
		\hfill \textcolor{blue}{//call Algorithm~\ref{algo:find_sub_junction_tree} in Section~\ref{sec:appendix_algo}.}
		\IF{$\widetilde{\mathcal{T}}_{G}$ is not empty}
		\STATE \textbf{Stage 2: }Find a clique in $\widetilde{\mathcal{T}}_{G}$ that contains $X_R$: $C_0\leftarrow$Find Key Clique($\widetilde{\mathcal{T}}_{G}, K, B, \Delta, \hat{R}$).\\
		\hfill \textcolor{blue}{//call Algorithm~\ref{algo:find_clique} in Section~\ref{sec:appendix_algo}.}
		\STATE \textbf{Stage 3: }Apply UCB algorithm on $\mathcal{A}_{C_0} = \{\text{do}(X = k)\mid X\in V(C_0), k = 1,\ldots,K\}$ for $T_2$ rounds.
		\RETURN Finished.
		\ENDIF
		\ENDFOR
		\RETURN No clique that contains $X_R$ is found. Apply UCB algorithm on the entire action space $\mathcal{A} = \{\text{do}(X = x)\mid X\in\mathcal{X}, x\in[K]\}$. \label{algo:line_fail}
	\end{algorithmic}
	\label{algo:general_causal_graph}
\end{algorithm}

\subsection{CN-UCB for general causal graphs}
Our approach for causal bandit problems with general causal graphs is similar as previous settings.
We try to find a subset of variables $\mathcal{X}_{\mathrm{sub}}$ that contain the true reward generating node $X_R$ with high probability and then apply UCB algorithm on interventions over $\mathcal{X}_{\mathrm{sub}}$.

Our method is presented in Algorithm~\ref{algo:general_causal_graph}.
We search for $X_R$ in every chain component $G\in\text{CC}(\mathcal{E}(D))$ separately. 
In stage 1, we create a junction tree $\mathcal{T}_G$ and search for a directed sub-junction-tree $\widetilde{\mathcal{T}}_{G}$ that contains $X_R$ via Algorithm~\ref{algo:find_sub_junction_tree}.
If such a $\widetilde{\mathcal{T}}_{G}$ exists, we keep searching for a clique $C_0$ in $\widetilde{\mathcal{T}}_{G}$ that contains $X_R$ in stage 2 via Algorithm~\ref{algo:find_clique} and apply UCB algorithm on single-node interventions over variables in $C_0$. 
Otherwise, we repeat above procedure for the next chain component.

Stage 1 and stage 2 in Algorithm~\ref{algo:general_causal_graph} also use the idea of central node interventions.
But different from previous settings, Algorithm~\ref{algo:find_sub_junction_tree} and Algorithm~\ref{algo:find_clique} proceed by performing interventions on variables in the central clique that changes over time and every central clique intervention can eliminate at least half of the cliques on $\mathcal{T}_G$ or $\widetilde{\mathcal{T}}_G$ from future searching.
A central clique for a junction tree is defined in the same way as a central node for a tree in that we just treat the cliques on a junction tree as the ``node" in Definition~\ref{def:cnode}.

\paragraph{Remark on the intersection-incomparable property.}
We now explain why we need the intersection-incomparable property like other central node based algorithm that also involves constructing clique graphs~\citep{squires2020active}.
In general, it is possible for a directed junction tree $\mathcal{T}$ to have v-structures over cliques, which can make us unable to finish searching for a directed sub-junction-tree that contains $X_R$ by intervening on variables in $O(\log |\mathcal{T}|)$ cliques, where $|\mathcal{T}|$ denotes the number of cliques on $\mathcal{T}$.
The reason is, a clique node may have more than one clique parents in $\mathcal{T}$, so that we cannot eliminate more than half of the cliques by interventions over the central clique.
In total, we may need to perform $O(n)$ single-node interventions to find $X_R$ and incur $O(n)$ regret.
However, for every chain component $G$ of the essential graph, if its clique graph $\Gamma_G$ is intersection-incomparable, then there is no v-structure over cliques in any junction tree $\mathcal{T}_G$ of $G$~\citep{squires2020active}, i.e. every node has at most one parent node. 

\yangyi{added below paragraph}
Algorithm~\ref{algo:general_causal_graph} can take any graph input no matter whether intersection-incomparable property holds or not. 
If the property does not hold, Algorithm~\ref{algo:general_causal_graph} may output nothing at line 5 or line 7 for all component $G\in\text{CC}(\mathcal{E}(D))$ (will not output an incorrect sub-junction tree or clique by its design). 
Thus, if the learner finds that Algorithm~\ref{algo:general_causal_graph} does not output anything after line 5 or 7 for all components, standard UCB algorithm can be used on the entire action set (see line \ref{algo:line_fail} in Algorithm~\ref{algo:general_causal_graph}).

\begin{thm}[Regret guarantee for Algorithm~\ref{algo:general_causal_graph} ($\Gamma_G$ is intersection-incomparable for all $G\in\text{CC}(\mathcal{E}(D))$)]
	\label{thm:general_graph}
	Define $0<\delta<1$ and $B = \max\left\{\frac{32}{\Delta^2}\log\left(\frac{8nK}{\delta}\right), \frac{2}{\varepsilon^2}\log\left(\frac{8n^2K^2}{\delta}\right)\right\}$.
	Suppose we run Algorithm~\ref{algo:general_causal_graph} for $T\gg B + KB\log_2n\left(\omega(G_R)+d\omega(G_R)+\sum_{G\in\text{CC}(\mathcal{E}(D))}\omega(G)\right):=T_1$ total number of interventions and $T_2:=T-T_1$.
	Then under Assumptions~\ref{assump:causal},~\ref{assump:bdd} and~\ref{assump:reward_gap}, with probability at least $1-\delta$, we have
	\begin{align}
	&R_T = \notag\\ &\widetilde{O}\left(\left(d(\mathcal{T}_{G_R})\omega(G_R)+\sum_{G\in\text{CC}(\mathcal{E}(D))}\omega(G)\right)K\max\left\{\frac{1}{\Delta^2}, \frac{1}{\varepsilon^2}\right\}\log n\log \mathcal{C}_{\text{max}}+\sqrt{\omega(G_R)KT}\right)
	\end{align}
	where $d(\mathcal{T}_{G_R})$ denotes the maximum degree of junction tree $\mathcal{T}_{G_R}$, 
	$G_R$ denotes the chain component that contains the true $X_R$ and $\mathcal{C}_{\text{max}}\triangleq \max_{G\in\text{CC}(\mathcal{E}(D))}|\mathcal{C}(G)|$.
	$\tilde{O}(\cdot)$ ignores poly-log terms non-regarding to $n$ or number of cliques.
\end{thm}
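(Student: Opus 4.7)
The plan is to mirror the argument used for Theorems~\ref{thm:general_causal_tree} and~\ref{thm:forest}, lifted from nodes to cliques of junction trees, and to exploit the intersection-incomparable property exactly at the point where the central-node argument needed a tree.

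First, I would define a high-probability ``accuracy event'' $\mathcal{G}$ as follows: every reward-mean estimate computed by the algorithm is within $\Delta/4$ of its true value, and every estimated interventional or observational probability used to orient a clique edge is within $\varepsilon/2$ of the true value. By the $1$-subGaussian assumption on rewards together with Hoeffding's inequality, choosing $B = \max\{32/\Delta^2\,\log(8nK/\delta),\ 2/\varepsilon^2\,\log(8n^2K^2/\delta)\}$ makes the failure probability of any single estimate at most $\delta/(4nK)$ (reward) or $\delta/(4n^2K^2)$ (probability); a union bound over the at most $O(n^2K^2)$ estimates ever formed throughout the three stages then gives $\Pr[\mathcal{G}] \geq 1-\delta$. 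All subsequent claims are made on $\mathcal{G}$.

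Next, under $\mathcal{G}$ I would verify the correctness of each stage. For Stage~1, for every candidate central clique $C$ of the current junction tree the interventions on the variables in $C$ allow us to (a) detect via Assumption~\ref{assump:reward_gap} whether any variable in $C$ is an ancestor of $X_R$, and (b) use Assumption~\ref{assump:bdd} and the sub-junction-tree's neighbor-probability comparisons to orient every edge of $\mathcal{T}_G$ incident to $C$. Because $\Gamma_G$ is intersection-incomparable, Proposition~1 of~\citet{squires2020active} guarantees that no junction tree of $G$ contains a v-structure over cliques, so each clique in $\mathcal{T}_G$ has at most one clique parent; this is what lets interventions on the central clique unambiguously determine the directed sub-junction-tree rooted at $C$ and hence eliminate either the rooted subtree (if $C$ contains no ancestor of $X_R$) or the complement (if it does). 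The adapted central-clique property (weight $\leq 1/2$ in every branch) ensures the candidate set shrinks by at least a factor of two per round, so Stage~1 terminates in $O(\log \mathcal{C}_{\max})$ central-clique interventions per chain component. Stage~2 is analogous inside $\widetilde{\mathcal{T}}_{G_R}$: each round either eliminates the subtree rooted at the current central clique or intervenes on its at most $d(\mathcal{T}_{G_R})$ children cliques and recurses on one of them, terminating in $O(\log \mathcal{C}_{\max})$ rounds with a clique $C_0 \ni X_R$.

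I would then bound the exploration regret. A single ``clique intervention'' on $C$ costs $|C|\cdot K \cdot B$ single-node interventions. Summing Stage~1 across all components yields at most $KB \log_2\mathcal{C}_{\max}\cdot\sum_{G\in\text{CC}(\mathcal{E}(D))}\omega(G)$, Stage~2 contributes at most $KB \log_2\mathcal{C}_{\max}\cdot d(\mathcal{T}_{G_R})\,\omega(G_R)$, and the initial reward estimate costs $B$. Since $\mathcal{C}_{\max}\leq n$ and rewards lie in $[0,1]$, the exploration regret is at most $T_1$. For Stage~3, under $\mathcal{G}$ the optimal arm lies in $\mathcal{A}_{C_0}$, a set of size $K\,\omega(G_R)$; applying the standard UCB regret bound gives $\widetilde{O}(\sqrt{K\,\omega(G_R)\,T_2})$ regret over the remaining $T_2$ rounds. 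Summing the two contributions and absorbing logarithmic factors into $\widetilde{O}$ yields the claimed bound.

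The main obstacle will be Stage~1 correctness: one must argue carefully that intersection-incomparability is exactly the structural hypothesis needed so that (i) central-clique interventions faithfully orient every incident junction-tree edge, (ii) because cliques have unique parents in $\mathcal{T}_G$, learning which neighbor clique is the parent of $C$ suffices to determine the entire directed sub-junction-tree at $C$, and (iii) the halving property of the central-clique subroutine is preserved round-to-round as weights collapse to zero on eliminated branches. The sample-count bookkeeping, the union bound, and the UCB invocation in Stage~3 are then routine applications of the tools already used in the proofs of Theorems~\ref{thm:general_causal_tree} and~\ref{thm:forest}.
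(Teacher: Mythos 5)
Your proposal follows essentially the same route as the paper's proof: the same Hoeffding-plus-union-bound good event with the same $B$, the same per-clique intervention accounting ($KB\,\omega(G)\log_2|\mathcal{C}(G)|$ for Stage~1 over all components, $d(\mathcal{T}_{G_R})KB\,\omega(G_R)\log_2|\mathcal{C}(G_R)|$ for Stage~2 on $G_R$ only), and the same UCB invocation on the reduced set of size $K\omega(G_R)$. If anything you are more explicit than the paper about why intersection-incomparability (no v-structures over cliques, hence unique clique parents) is needed for Stage~1 correctness, a point the paper's proof leaves implicit by appeal to the algorithm's design.
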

Above regret bound shows that our algorithm outperforms standard MAB algorithms especially when $n$ is large and the degree $d(\mathcal{T}_{G_R})$ and the clique numbers $\omega(G)$ are small .
Also, above result reduces to Theorem~\ref{thm:forest} when $D$ is a causal forest, because $\omega(G)$ is always $2$ for tree-structure chain components.

	\section{Lower bounds} 
	\label{sec:lower}
	In this section we show without Assumption~\ref{assump:bdd} or~\ref{assump:reward_gap}, any algorithm will incur an $\Omega\left(\sqrt{nkT}\right)$ regret in the worst-case, which is exponentially worse than our results in terms of $n$.

\begin{defini}[$nK$-arm Gaussian Causal Bandit Class]
\label{def:gaussian_bandit_class}
    For a bandit instance in $nK$-arm Gaussian causal bandit class, actions are composed by single-node interventions over $n$ variables $\mathcal{X} = \{X_1,\ldots,X_n\}$: $\mathcal{A} = \{\mathrm{do}(X_i=x)|x\in[K]; i=1,\ldots,n\}$ with $|\mathcal{A}| = nK$.
    The reward for every action is Gaussian distributed and is directly influenced by one of $\mathcal{X}$. 
\end{defini}

\begin{thm}[Minimax Lower Bound Without Assumption~\ref{assump:bdd}]
\label{thm:lower_bdd}
	Let $\mathcal{E}$ be the set of $nK$-armed Gaussian bandits (Definition~\ref{def:gaussian_bandit_class}) where the instances in $\mathcal{E}$ does not satisfy Assumption~\ref{assump:bdd}.
	We show that the minimax regret is
	$\inf_{\pi\in\Pi}\sup_{\nu\in\mathcal{E}} \mathbb{E}[R_T(\pi,\nu)] = \Omega(\sqrt{nKT})$,
	where $\Pi$ denotes the set of all policies.
\end{thm}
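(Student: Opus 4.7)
The plan is to reduce the problem to a standard unstructured $nK$-armed Gaussian bandit and invoke the classical $\Omega(\sqrt{kT})$ minimax lower bound for $k$-armed stochastic bandits.

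First I would construct a family of instances $\{\nu_{i^*, k^*} : i^* \in [n], k^* \in [K]\} \subseteq \mathcal{E}$ for which the $\mathcal{X}$-observations carry no information. Fix a DAG $D$ that contains at least one directed edge (e.g., the path $X_1 \to \cdots \to X_n$) and equip it with a joint distribution in which every declared edge has \emph{zero} causal effect, so that $P(X_j \mid \mathrm{do}(X_i = x')) = P(X_j)$ for every edge $X_i \to X_j$. Such an instance belongs to $\mathcal{E}$ because no $\varepsilon > 0$ can witness Assumption~\ref{assump:bdd} on those edges. Within $\nu_{i^*, k^*}$ take $X_R := X_{i^*}$ and give the reward channel mean $\tfrac{1}{2} + \Delta$ under the unique action $\mathrm{do}(X_{i^*} = k^*)$ and mean $\tfrac{1}{2}$ under every other action, where $\Delta := c\sqrt{nK/T}$ for a small constant $c$.

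Second, I would argue that conditionally on any fixed action $a_t = \mathrm{do}(X_i = k)$, the joint distribution of the observed $(X_1, \ldots, X_n, \mathbf{R})$ differs across two instances $\nu_{i^*, k^*}$ and $\nu_{j^*, l^*}$ only through the reward channel: the non-intervened $\mathcal{X}$-coordinates are drawn from the common observational marginal regardless of the instance because all causal effects are zero, while the intervened coordinate is deterministic. By the divergence decomposition lemma (e.g.\ Lemma~15.1 of Lattimore and Szepesv\'ari), the KL-divergence between the trajectory laws induced by any two instances equals the cumulative per-round KL-divergence of their reward distributions under the chosen actions. The problem thus collapses to an $nK$-armed Gaussian bandit with exactly one ``good'' arm of gap $\Delta$.

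Third, I would apply the standard minimax lower bound for stochastic bandits (Theorem~15.2 of Lattimore and Szepesv\'ari, or the Auer--Cesa-Bianchi--Freund--Schapire ``needle in the haystack'' argument): averaging over the uniform prior on $(i^*, k^*) \in [n] \times [K]$ and optimising $\Delta$ yields $\inf_\pi \sup_{\nu \in \mathcal{E}} \mathbb{E}[R_T(\pi, \nu)] \geq c'\sqrt{nKT}$ for a universal constant $c' > 0$.

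The main obstacle I anticipate is ensuring that the construction is simultaneously a valid Gaussian causal bandit in $\mathcal{E}$ and a genuine violator of Assumption~\ref{assump:bdd}. The zero-causal-effect choice above makes both immediate, at the cost of producing an unfaithful distribution on $D$; if faithfulness must be retained, one can instead let the causal effects be a tiny parameter $\eta \to 0$ and verify that the $\mathcal{X}$-observations contribute only $O(T\eta^2)$ of KL to the trajectory, which is negligible against the reward-only $\Omega(\sqrt{nKT})$ lower bound once $\eta$ is chosen small relative to $T^{-1/2}$.
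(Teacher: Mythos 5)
Your proposal is correct and follows essentially the same route as the paper: the paper also builds a family of $nK$ (plus one null) instances in which every conditional $P(X_{i+1}\mid X_i)$ is degenerate and independent of the parent's value, so that interventions reveal nothing through the $\mathcal{X}$-observations and Assumption~\ref{assump:bdd} fails, then places a single gap-$\Delta$ arm with $\Delta=\frac{1}{4}\sqrt{nK/T}$ and applies divergence decomposition plus Pinsker to get the $\Omega(\sqrt{nKT})$ bound. The only cosmetic differences are that the paper permutes the causal order so the rewarded variable is always a root (ensuring its ``good'' value is reachable only by directly intervening on it) and measures all KL against the explicit null instance, whereas you keep a fixed DAG and average over a uniform prior; neither difference affects the argument.
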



\begin{thm}[Minimax Lower Bound (Assumption~\ref{assump:bdd} holds but Assumption~\ref{assump:reward_gap} does not hold)]
\label{thm:lower_reward_gap}
	Let $\mathcal{E}$ be the set of $nK$-armed Gaussian bandits (Definition~\ref{def:gaussian_bandit_class}) where the instances in $\mathcal{E}$ satisfy Assumption~\ref{assump:bdd} but does not satisfy Assumption~\ref{assump:reward_gap}.
	We show that the minimax regret is
	$\inf_{\pi\in\Pi}\sup_{\nu\in\mathcal{E}} \mathbb{E}[R_T(\pi,\nu)] = \Omega(\sqrt{nKT})$,
	where $\Pi$ denotes the set of all policies.
\end{thm}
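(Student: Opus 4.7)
The plan is to reduce the lower bound for causal bandits without Assumption~\ref{assump:reward_gap} to the classical minimax regret bound for an $nK$-armed Gaussian MAB. The key idea is to exhibit a sub-family of $\mathcal{E}$ whose causal covariates $\mathcal{X}$ are uninformative, so that no algorithm can exploit observations to shortcut exploration of the $nK$ intervention arms.

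For the hard family, fix a small $\epsilon\in(0,1)$ and, for each $(i^*,x^*)\in[n]\times\{2,\ldots,K\}$, define an instance $\nu_{i^*,x^*}$ with: (i) a causal graph consisting of $n$ isolated nodes $X_1,\ldots,X_n$, so the essential graph is empty and Assumption~\ref{assump:bdd} holds vacuously; (ii) the degenerate joint law $X_i\equiv 1$ for all $i$; and (iii) reward generator $X_R=X_{i^*}$ with $R = \epsilon\,\mathbb{1}[X_{i^*}=x^*] + \eta$, $\eta\sim N(0,1)$. Since $\text{An}(X_R)=\{X_{i^*}\}$, the maximal shift $|\mathbb{E}[R\mid\text{do}()] - \mathbb{E}[R\mid\text{do}(X_{i^*}=x^*)]|$ equals $\epsilon$; choosing $\epsilon<\Delta$ breaks Assumption~\ref{assump:reward_gap} for any prescribed $\Delta>0$, so letting $\epsilon\to 0$ places the family inside $\mathcal{E}$.

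I would then argue that the only informative statistic is the scalar reward under interventions. Under any $\text{do}(X_j=x)$ (and under $\text{do}()$), the observation $\mathcal{X}$ is a deterministic function of the action, and the marginal law of $R$ at each arm is $N(\epsilon,1)$ at the unique optimal arm $\text{do}(X_{i^*}=x^*)$ and $N(0,1)$ at every other arm. Thus the round-$T$ history has a law that depends on the hypothesis $(i^*,x^*)$ only through these $nK$ arm means, which is exactly a needle-in-the-haystack $nK$-arm Gaussian bandit with gap $\epsilon$. Applying a standard Bretagnolle--Huber/KL minimax lower bound (e.g., Theorem~15.2 of Lattimore \& Szepesv\'ari) with $\epsilon=c\sqrt{nK/T}$ then yields $\inf_{\pi}\sup_{(i^*,x^*)}\mathbb{E}[R_T(\pi,\nu_{i^*,x^*})] = \Omega(\sqrt{nKT})$.

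The main obstacle is precisely the step of neutralizing the observational channel. Had I picked non-degenerate $X_i$'s (e.g., i.i.d.\ uniform over $[K]$), then empty interventions would expose the correlation between $R$ and $X_{i^*}$, letting the algorithm localize $X_R$ in $\widetilde{O}(K/\epsilon^2)$ observational samples and reduce the problem to a $K$-arm MAB with only $\widetilde{O}(\sqrt{KT})$ regret, which is insufficient. Fixing $X_i\equiv 1$ makes the covariate likelihood invariant across hypotheses and preserves the full $nK$-arm complexity; once this is in place, the KL divergence between any two instances $\nu_{i^*,x^*}$ and $\nu_{i',x'}$ is $\epsilon^2/2$ per pull of an arm in the symmetric difference of their optimal actions, and the remainder is the standard pigeonhole argument over the $nK$ candidate identities of $(i^*,x^*)$.
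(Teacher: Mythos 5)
Your high-level strategy (deterministic, hypothesis-independent covariates so that the only information channel is the scalar reward, followed by the standard KL/pigeonhole argument over $nK$ needle-in-the-haystack Gaussian instances) is exactly the paper's strategy, and that part of your argument is sound. The problem is your choice of hard instances. With $n$ isolated nodes, $X_R$ has no nontrivial ancestors, so the set $\text{An}(X_R)$ over which Assumption~\ref{assump:reward_gap} quantifies is either empty (strict ancestors) or $\{X_R\}$ itself. In the first case the assumption holds vacuously and your instances are simply not in $\mathcal{E}$; in the second case your instances do satisfy the assumption with constant $\Delta'=\epsilon>0$, and the only sense in which they ``violate'' it is that the optimal arm's gap is smaller than a prescribed $\Delta$ --- i.e.\ you have violated a minimum-gap condition on the optimal arm, not the ancestor-detectability property that Assumption~\ref{assump:reward_gap} actually encodes and that the theorem is meant to show is necessary. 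Under that reading essentially every minimax-hard instance ``violates'' the assumption, which makes the statement contentless. A further symptom of the same issue is that Assumption~\ref{assump:bdd} also only holds vacuously (no edges), so your family never exercises the interplay between the two assumptions.

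The paper's construction avoids this by using directed chains (so the lower bound applies even to the causal-tree class for which CN-UCB is designed) over $(K+2)$-ary variables. Each edge $X\rightarrow Y$ has $\mathbb{P}(Y=1\mid X=1)=1$ while $\mathbb{P}(Y=1)=0$, so Assumption~\ref{assump:bdd} holds non-vacuously with $\varepsilon=1$; but the reward is triggered only by values $\geq 2$, and $\mathbb{P}(Y\geq 2\mid X)=0$ for every value of $X$, so these values never propagate. Hence intervening on any \emph{strict} ancestor of $X_R$ leaves the reward mean exactly $0$, violating Assumption~\ref{assump:reward_gap} for \emph{every} $\Delta>0$, unambiguously and independently of the convention for $\text{An}(\cdot)$. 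To repair your proof you should replace the isolated-node family with a family in which $X_R$ has genuine ancestors whose interventions provably have zero effect on the reward; once that is done, your reduction to the $nK$-armed Gaussian minimax bound goes through as you describe (modulo the minor bookkeeping that the intervention values must be disjoint from the baseline values the variables take observationally, which is why the paper enlarges the domain beyond $[K]$).
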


	\section{Discussion}
	\label{sec:dis}
	In this paper, we studied causal bandit problems with unknown causal graph structures. 
We proposed an efficient algorithm CN-UCB for the causal tree setting.
The regret of CN-UCB scales logarithmically with the number of variables $n$ when the intervention size is $nK$.
CN-UCB was then extended to broader settings where the underlying causal graph is a causal forest or belongs to a general class of graphs. 
For the later two settings, our regret again only scales logarithmically with $n$.
Lastly, we provide lower bound results to justify the necessity of our assumptions on the causal effect identifiability and the reward identifiability. 
Our approaches are the first set of causal bandit algorithms that do not rely on knowing the causal graph in advance and can still outperform standard MAB algorithms.

There are several directions for future work. 
First, one can generalize CN-UCB to the multiple reward generating node setting, i.e., more than one variable directly influences the reward.
We expect this can be done by repeatedly applying stage 1 and 2 in CN-UCB to find all the reward generating nodes one by one and apply a standard MAB algorithm on the reduced intervention set.
In this setting, it is natural to consider interventions that can be performed on more than one variables.
Second, it will be interesting to develop instance dependent regret bounds, for example, through estimating the probabilities over the causal graph in CN-UCB.
Lastly, one can also develop efficient algorithms that do not need to know the causal graph when confounders exist.

	\section*{Acknowledgement}
	
	We thank our NeurIPS reviewers and meta-reviewer for helpful suggestions to improve the paper.
	
	\section*{Funding transparency statement}
	\paragraph{Funding (financial activities supporting the submitted work): }
	Funding in direct support of this work: NSF CAREER grant IIS-1452099, Adobe Data Science Research Award. 
	
	\paragraph{Competing Interests (financial activities outside the submitted work):} None
	
	\newpage
	\bibliographystyle{apalike}
	\bibliography{ref.bib}

	\newpage
	\appendix
	\section{Proof for Theorems}
\label{sec:appendix_thm}

\subsection{Proof for Theorem~\ref{thm:general_causal_tree}}
\label{sec:proof_tree}
We first present a sample complexity result for interventions in stage 1 and 2. 
\begin{lemma}[Sample Complexity for Algorithm~\ref{algo:general_causal_tree} Before Stage 3]
	\label{lemma:general_causal_tree_sample}
	Set $B = \max\{\frac{32}{\Delta^2}\log\left(\frac{8nK}{\delta}\right), \frac{2}{\varepsilon^2}\log\left(\frac{8n^2K^2}{\delta}\right)\}$, with probability at least $1-\delta$, stage 2 in Algorithm~\ref{algo:general_causal_tree} outputs the true key node within $KB(2+d)\log_2n+B$ interventions, where $d$ is defined as the maximum degree of the causal tree skeleton.
\end{lemma}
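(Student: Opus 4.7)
The plan is to combine a high-probability concentration argument with a recursion bound on the number of central-node interventions. I would fix two tolerances up front: reward estimates should be accurate to $\Delta/4$ and interventional probability estimates to $\varepsilon/2$, since these are the thresholds at which Assumption~\ref{assump:reward_gap} and Assumption~\ref{assump:bdd} are usable for making decisions. The size of $B$ is then forced by demanding that Hoeffding gives these tolerances simultaneously after a union bound over every estimate CN-UCB will ever read.

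Concretely, I would apply Hoeffding to $\hat R$ (the $B$-sample estimate of $\mathbb{E}[\mathbf R|\mathrm{do}()]$) and to each reward estimate $\hat{\mathbb{E}}[\mathbf R|\mathrm{do}(X=x)]$ built from $B$ samples. Only $(X,x)$ pairs touched by the algorithm matter, and there are at most $nK$ of them; taking $B\ge 32\log(8nK/\delta)/\Delta^2$ and union-bounding gives reward accuracy $\Delta/4$ everywhere with probability $\ge 1-\delta/2$. The same $B$ samples from $\mathrm{do}(X=x)$ yield empirical frequencies $\hat P(Y=y\mid\mathrm{do}(X=x))$ for each neighbor $Y$ and value $y$; across all tuples this is at most $n^2K^2$ quantities, and $B\ge 2\log(8n^2K^2/\delta)/\varepsilon^2$ with a second union bound gives accuracy $\varepsilon/2$ everywhere with probability $\ge 1-\delta/2$. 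Taking $B$ as the maximum of the two lower bounds closes both events in a single $1-\delta$ guarantee.

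On that good event, correctness of every decision CN-UCB makes is immediate. The ancestor test works because, for $X\in\mathrm{An}(X_R)$, Assumption~\ref{assump:reward_gap} supplies some $x$ with $|\mu_{\mathrm{do}(X=x)}-\mu_{\mathrm{do}()}|\ge\Delta$, so comparing $\Delta/4$-accurate estimates against a $\Delta/2$ threshold separates $\mathrm{An}(X_R)$ from its complement without error. The edge-orientation test works because, for a true edge $X_i\to X_j$, Assumption~\ref{assump:bdd} supplies a gap of at least $\varepsilon$ between $P(X_j=y\mid\mathrm{do}(X_i=x))$ and $P(X_j=y)$, detectable at tolerance $\varepsilon/2$; in the reverse orientation Assumption~\ref{assump:causal} and d-separation produce no such gap, so directions on every tree edge incident to a queried node are recovered correctly, and the rest of the sub-tree is then oriented by the tree property that each node has at most one parent.

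Finally I would count samples. The initial probe of $\mathrm{do}()$ costs $B$. In Stage~1, each central-node round probes $K$ values with $B$ samples each, for a cost of $KB$; by Definition~\ref{def:cnode} every such round either identifies an ancestor of $X_R$ (ending Stage~1) or eliminates a branch carrying at least half the remaining weight, so at most $\log_2 n$ rounds occur, giving $KB\log_2 n$ samples. Stage~2 performs at most $\log_2 n$ central-node rounds on the shrinking directed sub-tree, each followed by interventions on up to $d$ children at cost $KB$ apiece, for a per-round cost of $KB(1+d)$ and a total of $KB(1+d)\log_2 n$. Summing gives $B+KB\log_2 n+KB(1+d)\log_2 n=B+KB(2+d)\log_2 n$, matching the stated bound. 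The only delicate step is the book-keeping of the union bound so that both the reward and probability concentration inequalities hold under one choice of $B$; once that is fixed the central-node halving argument of \citet{greenewald2019sample} delivers the logarithmic factor with only cosmetic modifications to account for the additional $d$ child interventions of Stage~2.
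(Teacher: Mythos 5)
Your proposal is correct and follows essentially the same route as the paper's proof: a Hoeffding-plus-union-bound construction of a good event on which all reward estimates are $\Delta/4$-accurate and all interventional probability estimates are $\varepsilon/2$-accurate (which is exactly what the stated $B$ delivers), followed by the triangle-inequality argument that the $\Delta/2$ and $\varepsilon/2$ decision thresholds are then never crossed incorrectly, and the same intervention count $B + KB\log_2 n + KB(1+d)\log_2 n = B + KB(2+d)\log_2 n$. Your write-up is in fact somewhat more explicit than the paper's about why the central-node halving yields the $\log_2 n$ factor and how the $\delta$ budget is split, but there is no substantive difference.
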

\begin{proof}[Proof for Lemma~\ref{lemma:general_causal_tree_sample}]
	By Hoeffding's inequality for bounded random variables, for any fixed $X, Y\in\mathcal{X}, x\in[K], y\in[K]$, we have
	\begin{align*}
	|P(Y=y|\text{do}(X = x))-\hat{P}(Y=y|\text{do}(X = x))| \leq \sqrt{\frac{1}{2B}\log\left(\frac{8n^2K^2}{\delta}\right)},
	\end{align*}
	with probability at least $1-\frac{\delta}{4n^2K^2}$. 
	Furthermore, for fixed $X\in\mathcal{X}$ and $x\in[K]$, by Hoeffding's inequality for sub-gaussian random variables, we have
	\begin{align*}
	|\hat{R}^{\text{do}(X=x)} - \mathbb{E}[\mathbf{R} | \text{do}(X=x)]|\leq\sqrt{\frac{2}{B}\log\left(\frac{8nK}{\delta}\right)},
	\end{align*}
	with probability at least $1-\frac{\delta}{4nK}$.
	For empty intervention, we write $\hat{R}$ for $\hat{R}^{\text{do}()}$, then 
	\begin{align*}
	|\hat{R} - \mathbb{E}[\mathbf{R}|\text{do}()]|\leq \sqrt{\frac{2}{B}\log\left(\frac{4}{\delta}\right)}
	\end{align*}
	holds with probability at least $1-\frac{\delta}{2}$.
	We define the union of above good events by $E$. 
	By union bound, we have $P(E^c) \leq \delta$.
	
	Under event $E$, suppose $X$ is the key node, one can show that 
	\begin{align*}
	&|\hat{R} - \hat{R}^{\text{do}(X=x)}| \geq\\ &|\mathbb{E}[\mathbf{R}|\text{do}()] - \mathbb{E}[\mathbf{R}|\text{do}(X=x)]| - |\hat{R} - \mathbb{E}[\mathbf{R}|\text{do}()]| - |\mathbb{E}[\mathbf{R}|\text{do}(X=x)]-\hat{R}^{\text{do}(X=x)}|\geq \frac{\Delta}{2}.
	\end{align*}
	And for any $X, Y, x, y$ such that $X\rightarrow Y$, 
	\begin{align*}
	&|P(Y = y) - \hat{P}(Y = y|\text{do}(v_c(t) = z))|\\
	&\geq |P(Y = y) - P(Y = y|\text{do}(v_c(t) = z))| - |P(Y = y|\text{do}(v_c(t) = z))
	- \hat{P}(Y = y|\text{do}(v_c(t) = z))|\\
	 &\geq \frac{\varepsilon}{2}.
	\end{align*}
	Combine everything before stage 3, the total number of interventions is $2KB\log_2n+dKB\log_2n+B$. 
\end{proof}
\begin{proof}[Proof for Theorem~\ref{thm:general_causal_tree}]
	Condition on the good event E in the proof of Lemma~\ref{lemma:general_causal_tree_sample}, 
	stage 2 in Algorithm~\ref{algo:general_causal_tree} returns the true reward generating node $X_R$. 
	Combining with the total intervention sample results in Lemma~\ref{lemma:general_causal_tree_sample} we have
	\begin{align*}
		R_T &\leq KB(2+d)\log_2n+B+\sqrt{KT\log T} \\
		&= O\left(K\max\left\{\frac{1}{\Delta^2},\frac{1}{\varepsilon^2}\right\}d\log\left(\frac{nK}{\delta}\right)\log_2n+\sqrt{KT\log T}\right).
	\end{align*}
	holds with probability at least $1-\delta$. 
\end{proof}

\subsection{Proof for Theorem~\ref{thm:forest}}
\label{sec:proof_forest}
\begin{proof}
	The proof is straightforward from Theorem~\ref{thm:general_causal_tree}. 
	In Algorithm~\ref{algo:forest}, there is only one tree component that contains $X_R$. 
	Thus, in the worst case, we find $X_R$ in the last component and stage 1 needs to be performed for $C(D)$ times. 
	Combine with the results in Section~\ref{sec:proof_tree}, the total number of interventions is $C(D)KB\log_2n + KB\log_2n + dKB\log_2n+B$.
	Then, the regret can be bounded by:
	\begin{align*}
		R_T &\leq KB(d+C(D)+1)\log_2n + B + \sqrt{KT\log T}\\
		&= O\left(K\max\left\{\frac{1}{\Delta^2},\frac{1}{\varepsilon^2}\right\}(d+C(D))\log\left(\frac{nK}{\delta}\right)\log_2n + \sqrt{KT\log T}\right).
	\end{align*}
\end{proof}

\subsection{Proof for Theorem~\ref{thm:general_graph}}
\label{sec:proof_general}
\begin{proof}
	We follow the proof idea in Theorem~\ref{thm:general_causal_tree}.\\
	By Hoeffding's inequality for bounded random variables, for any fixed $X, Y\in\mathcal{X}, x\in[K], y\in[K]$, we have
	\begin{align*}
	|P(Y=y|\text{do}(X = x))-\hat{P}(Y=y|\text{do}(X = x))| \leq \sqrt{\frac{1}{2B}\log\left(\frac{8n^2K^2}{\delta}\right)},
	\end{align*}
	with probability at least $1-\frac{\delta}{4n^2K^2}$. 
	Furthermore, for fixed $X\in\mathcal{X}$ and $x\in[K]$, by Hoeffding's inequality for sub-gaussian random variables, we have
	\begin{align*}
	|\hat{R}^{\text{do}(X=x)} - \mathbb{E}[\mathbf{R} | \text{do}(X=x)]|\leq\sqrt{\frac{2}{B}\log\left(\frac{8nK}{\delta}\right)},
	\end{align*}
	with probability at least $1-\frac{\delta}{4nK}$.
	For empty intervention, we write $\hat{R}$ for $\hat{R}^{\text{do}()}$, then 
	\begin{align*}
	|\hat{R} - \mathbb{E}[\mathbf{R}|\text{do}()]|\leq \sqrt{\frac{2}{B}\log\left(\frac{4}{\delta}\right)}
	\end{align*}
	holds with probability at least $1-\frac{\delta}{2}$.
	We define the union of above good events by $E$. 
	By union bound, we have $P(E^c) \leq \delta$.
	We condition on event $E$ for the following analysis.
	
	By the design of Algorithm~\ref{algo:general_causal_graph}, we search for the reward generating node in every chain component of $\text{CC}(\mathcal{E}(D))$.
	In the worst case, we find $X_R$ in the last component. 
	In this case, Algorithm~\ref{algo:find_sub_junction_tree} (stage 1) is executed for every component $G$.
	and Algorithm~\ref{algo:find_clique} (stage 2) is executed only for the chain component that contains $X_R$. 
	We use $|\mathcal{C}(G)|$ to denote the number of maximal cliques of component $G$ for every $G\in\text{CC}(\mathcal{E}(D))$.
	
	Specifically, for every chain component $G$, Algorithm~\ref{algo:find_sub_junction_tree} performs $KB\log_2|\mathcal{C}(G)|$ clique interventions, i.e. at most $KB\omega(G)\log_2|\mathcal{C}(G)|$ node interventions.
	For component $G_R$ that contains $X_R$, Algorithm~\ref{algo:find_clique} performs $(KB+d(\mathcal{T}_{G_R})KB)\log_2|\mathcal{C}(G_R)|$ clique interventions, i.e. at most $(KB+d(\mathcal{T}_{G_R})KB)\omega(G_R)\log_2|\mathcal{C}(G_R)|$ node interventions.
	
	Thus, the total number of interventions before applying UCB algorithm on the reduced intervention set is: $\sum_{G\in\text{CC}(\mathcal{E}(G))}KB\omega(G)\log_2|\mathcal{C}(G)|+dKB\omega(G_R)\log_2|\mathcal{C}(G_R)|$. 
	Thus, conditioning on event $E$ which holds with at least $1-\delta$, we have
	\begin{align*}
		R_T &\leq \sum_{G\in\text{CC}(\mathcal{E}(D))}KB\omega(G)\log_2|\mathcal{C}(G)|+d(\mathcal{T}_{G_R})KB\omega(G_R)\log_2|\mathcal{C}(G_R)|+\sqrt{\omega(G_R)KT\log T}\\
		& = \widetilde{O}\big(K\max\{\frac{1}{\Delta^2}, \frac{1}{\varepsilon^2}\}\log n(d(\mathcal{T}_{G_R})\omega(G_R)\log_2|\mathcal{C}(G_R)|+\sum_{G\in\text{CC}(\mathcal{E}(D))}\omega(G)\log_2 |\mathcal{C}(G)|)\\
		&+\sqrt{\omega(G_R)KT}\big)
	\end{align*}
	where $\widetilde{O}(\cdot)$ ignores poly-log terms non-regarding to $n$ or clique sizes.
\end{proof}

\subsection{Proof for Theorem~\ref{thm:lower_bdd}}

\begin{proof}
We use an example to show that, without Assumption~\ref{assump:bdd}, no algorithm can achieve worst-case regret better than $\widetilde{O}(\sqrt{nKT})$.

Consider $(K+1)$-nary variables $X_1,\ldots,X_n\in\{0,1,\ldots,K\}$ and action set $\mathcal{A} = \{\text{do}(X_i=x)|x\in \{1,\ldots,K\}; i=1,\ldots,n\}$. Note that $|\mathcal{A}| = nK$.

We construct below $nK+1$ bandit instances such that Assumption~\ref{assump:bdd} does not hold.
\paragraph{Bandit Instance $0$:}
\begin{itemize}
	\item Causal structure $X_1\rightarrow X_2\rightarrow\cdots\rightarrow X_n$.
	\item Probabilities assigned: $\mathbb{P}(X_1=0) = \mathbb{P}(X_{i+1}=0|X_i) = 1, i=1,\ldots,n-1$.\footnote{$\mathbb{P}(X_{i+1}=0|X_i) = 1$ means no matter what value $X_i$ is, the conditional probability $\mathbb{P}(X_{i+1}=0|X_i)$ is always 1. Similar expression is also used for constructing other bandit instances.}
	\item Reward generation: $R\sim N(0,1)$.
	\item Note: for all $nK$ action, conditional reward mean is $0$.
\end{itemize}

\paragraph{Bandit Instance $k$, where $k=1,\ldots,K$:}
\begin{itemize}
	\item Causal structure $X_1\rightarrow X_2\rightarrow\cdots\rightarrow X_n$.
	\item Probabilities assigned: $\mathbb{P}(X_1=0) = \mathbb{P}(X_{i+1}=0|X_i) = 1, i=1,\ldots,n-1$.
	\item Reward generation: $R\sim N(\Delta\mathbb{1}_{\{X_1=k\}},1)$.
	\item Note: only action $\text{do}(X_1=k)$ has reward mean $\Delta$, otherwise $0$. 
\end{itemize}

\paragraph{Bandit Instance $(j-1)K+k\sim jK$, where $j=2,\ldots,n-1, k=1,\ldots,K$:}
\begin{itemize}
	\item Causal structure $X_j\rightarrow X_1\rightarrow\cdots\rightarrow X_{j-1}\rightarrow X_{j+1}\rightarrow\cdot\rightarrow X_n$.
	\item Probabilities assigned: $\mathbb{P}(X_j=0)=\mathbb{P}(X_1=0|X_j)=\mathbb{P}(X_{j+1}=0|X_{j-1})  = \mathbb{P}(X_{i+1}=0|X_i)= 1, i=1,\ldots,j-2,j+1,\ldots,n-1$.
	\item Reward generation: $R\sim N(\Delta\mathbb{1}_{\{X_j=k\}},1)$.
	\item Note: only action $\text{do}(X_j=k)$ has reward mean $\Delta$, otherwise $0$. 
\end{itemize}

\paragraph{Bandit Instance $(n-1)K+k\sim nK$, where $k=1,\ldots,K$:}
\begin{itemize}
	\item Causal structure $X_n\rightarrow\cdots\rightarrow X_2\rightarrow X_1$.
	\item Probabilities assigned: $\mathbb{P}(X_n=0) = \mathbb{P}(X_{i-1}=0|X_i) = 1, i=2,\ldots,n$.
	\item Reward generation: $R\sim N(\Delta\mathbb{1}_{\{X_n=k\}},1)$.
	\item Note: only action $\text{do}(X_n=k)$ has reward mean $\Delta$, otherwise $0$.
\end{itemize}

Take $\Delta = \frac{1}{4}\sqrt{\frac{nK}{T}}$, using the results in exercise 15.2 in~\citet{lattimore2018bandit} we know that there exists one bandit intance $\nu$ in above such that 
\begin{align}
\mathbb{E}[R_T]\geq \frac{1}{8}\sqrt{nKT},
\end{align}
where the expectation is taken over the entire randomness.

We present below lemmas in~\citet{lattimore2018bandit} to describe the proof for above conclusion.
\begin{lemma}[Divergence decomposition] \label{lemma:decomp}
     Let $\nu = (P_1,\ldots,P_k)$ be the reward distributions associated with one $k$-armed bandit, and let $\nu' = (P_1',\ldots,P_k')$ be the reward distributions associated with another $k$-armed bandit. Fix some policy $\pi$ and let $\mathbb{P}_\nu = \mathbb{P}_{\nu\pi}$ and $\mathbb{P}_{\nu'} = \mathbb{P}_{\nu'\pi}$ be the probability measures on the bandit model induced by the n-round interconnection of $\pi$ and $\nu$ ($\pi'$ and $\nu'$). Then
     \begin{align}
        	\mathbf{KL}(\mathbb{P}_\nu,\mathbb{P}_{\nu'}) = \sum_{i=1}^{k}\mathbb{E}_\nu\left[T_i(n)\right] \mathbf{KL}(P_i,P_i').
     \end{align}
\end{lemma}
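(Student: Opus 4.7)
The plan is to prove the identity by explicit computation of the Radon--Nikodym derivative between the two canonical bandit measures. First I would set up the measurable space: the outcome of the $n$-round interaction is a sequence $H_n = (A_1, X_1, A_2, X_2, \ldots, A_n, X_n)$, where $A_t$ is the action chosen at round $t$ and $X_t$ is the observed reward. Any (possibly randomized, history-dependent) policy $\pi$ is specified by conditional kernels $\pi_t(\cdot \mid H_{t-1})$ on $[k]$, and these kernels are identical under both $\mathbb{P}_\nu$ and $\mathbb{P}_{\nu'}$ because the policy is fixed. Denoting densities of $P_i$ and $P_i'$ by $p_i$ and $p_i'$ with respect to a common dominating measure (the two distributions must be mutually absolutely continuous arm-by-arm for the KL to be finite, otherwise both sides are $+\infty$ and the identity is trivial), the joint density of $H_n$ factorizes as
\begin{equation*}
\frac{d\mathbb{P}_\nu}{d\lambda}(h_n) \;=\; \prod_{t=1}^n \pi_t(a_t \mid h_{t-1})\, p_{a_t}(x_t),
\end{equation*}
and analogously for $\mathbb{P}_{\nu'}$ with $p'_{a_t}$ in place of $p_{a_t}$.

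Second, I would take the log-ratio and observe that the policy factors cancel:
\begin{equation*}
\log \frac{d\mathbb{P}_\nu}{d\mathbb{P}_{\nu'}}(H_n) \;=\; \sum_{t=1}^n \log \frac{p_{A_t}(X_t)}{p'_{A_t}(X_t)} .
\end{equation*}
Taking expectation under $\mathbb{P}_\nu$ gives $\mathbf{KL}(\mathbb{P}_\nu, \mathbb{P}_{\nu'})$ on the left. For the right side, I would apply the tower property conditioning on the $\sigma$-algebra $\mathcal{F}_{t-1}$ generated by $H_{t-1}$ and $A_t$: conditionally on $A_t = i$, the reward $X_t$ is drawn from $P_i$ under $\mathbb{P}_\nu$ regardless of the history (this is the defining property of the bandit model, where rewards depend only on the pulled arm), so
\begin{equation*}
\mathbb{E}_\nu\!\left[ \log \frac{p_{A_t}(X_t)}{p'_{A_t}(X_t)} \;\bigg|\; \mathcal{F}_{t-1}, A_t \right] \;=\; \mathbf{KL}(P_{A_t}, P'_{A_t}).
\end{equation*}

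Third, summing over $t$ and regrouping by arm using the counting identity $\sum_{t=1}^n \mathbb{1}\{A_t = i\} = T_i(n)$, linearity of expectation yields
\begin{equation*}
\sum_{t=1}^n \mathbb{E}_\nu\!\left[\mathbf{KL}(P_{A_t}, P'_{A_t})\right] \;=\; \sum_{i=1}^k \mathbf{KL}(P_i, P'_i)\, \mathbb{E}_\nu[T_i(n)],
\end{equation*}
which is the claimed decomposition. The main conceptual point --- and the only step that needs care --- is justifying the cancellation of the policy kernels and ensuring that the Markov-style property of the bandit model (that $X_t$ given $A_t$ is independent of the past history) is used correctly; this is immediate from the product form of the joint density but worth stating. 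The rest is bookkeeping: handling the measure-theoretic technicalities around the common dominating measure and the corner case where some $\mathbf{KL}(P_i, P'_i) = \infty$ with $\mathbb{E}_\nu[T_i(n)] > 0$, in which case both sides equal $\infty$.
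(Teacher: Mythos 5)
Your proof is correct and is the standard argument for this result. The paper does not actually prove this lemma itself --- it is imported verbatim from \citet{lattimore2018bandit} (Lemma 15.1 there) as a black box for the lower-bound construction --- and your chain-rule derivation (factorize the joint density of the history, cancel the common policy kernels in the log-likelihood ratio, apply the tower property conditional on the chosen arm, and regroup the $n$ per-round terms by arm into $\mathbb{E}_\nu[T_i(n)]$) is precisely the proof given in that reference, including the correct handling of the infinite-KL corner case.
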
 

\begin{lemma}[Pinsker inequality]\label{lemma:pinsker}
        For measures $P$ and $Q$ on the same probability space $(\Omega,\mathcal{F})$ that
        \begin{align}
           	\delta(P,Q)\triangleq \sup_{A\in\mathcal{F}} P(A)-Q(A)\leq \sqrt{\frac{1}{2}\mathbf{KL}(P,Q)}.
        \end{align}
\end{lemma}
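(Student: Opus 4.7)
The plan is to prove this classical inequality by reducing to the two-point case via the data-processing inequality. First, I would unpack the variational definition: for any measurable set $A \in \mathcal{F}$, the indicator map $\omega \mapsto \mathbb{1}_A(\omega)$ pushes $P$ forward to $\mathrm{Ber}(P(A))$ and $Q$ forward to $\mathrm{Ber}(Q(A))$. The data-processing inequality for KL divergence (provable in one line from the chain rule and nonnegativity of conditional KL) then gives
\[
\mathbf{KL}\bigl(\mathrm{Ber}(P(A)),\,\mathrm{Ber}(Q(A))\bigr) \;\le\; \mathbf{KL}(P,Q).
\]

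The core of the argument is the scalar (Bernoulli) Pinsker inequality: for any $p, q \in [0,1]$,
\[
2(p - q)^{2} \;\le\; p\log\tfrac{p}{q} + (1-p)\log\tfrac{1-p}{1-q},
\]
with the standard conventions $0\log 0 = 0$ and $x\log(x/0) = +\infty$ for $x > 0$. I would establish this by fixing $p$ and defining $f(q)$ equal to the right-hand side minus $2(p-q)^{2}$. A direct computation shows $f(p) = 0$ and $f'(p) = 0$, while $f''(q) = \tfrac{1}{q(1-q)} - 4 \ge 0$ on $(0,1)$ since $q(1-q) \le 1/4$. Therefore $f$ is minimized at $q = p$ and $f(q) \ge 0$ everywhere; the boundary cases where either side is infinite are handled separately under the stated conventions.

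Combining these two ingredients, for any $A \in \mathcal{F}$,
\[
2\bigl(P(A) - Q(A)\bigr)^{2} \;\le\; \mathbf{KL}\bigl(\mathrm{Ber}(P(A)),\mathrm{Ber}(Q(A))\bigr) \;\le\; \mathbf{KL}(P,Q).
\]
Rearranging gives $P(A) - Q(A) \le \sqrt{\tfrac{1}{2}\mathbf{KL}(P,Q)}$, and taking the supremum over $A \in \mathcal{F}$ yields the stated bound on $\delta(P,Q)$. The main obstacle is the scalar inequality; the calculus argument is elementary but it is important to verify the second-derivative bound over the full interval $(0,1)$ and to treat the boundary values $p, q \in \{0,1\}$ carefully so that the inequality holds in the extended reals and the subsequent data-processing step goes through without ambiguity.
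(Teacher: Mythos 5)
The paper does not actually prove this lemma: it is imported verbatim from \citet{lattimore2018bandit} as a standard tool for the lower-bound arguments, so there is no in-paper proof to compare against. Your route --- reduce to the two-point case by data processing applied to the indicator of $A$, then prove the scalar Pinsker inequality by calculus --- is the standard textbook proof and the overall architecture is sound.

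There is, however, a concrete error in the calculus step. With $f(q) = p\log\frac{p}{q} + (1-p)\log\frac{1-p}{1-q} - 2(p-q)^2$, the second derivative is
\begin{align*}
f''(q) \;=\; \frac{p}{q^2} + \frac{1-p}{(1-q)^2} - 4,
\end{align*}
not $\frac{1}{q(1-q)} - 4$; the two expressions coincide only when $q = p$ or $q = \tfrac12$. Moreover $f$ is \emph{not} convex in $q$ in general: for instance with $p = 0$ and $q = 2/5$ one gets $f''(q) = \frac{1}{0.36} - 4 < 0$. So the claim ``$f'' \ge 0$, hence $f$ is minimized at $q = p$'' does not hold as stated. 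The quantity $\frac{1}{q(1-q)} - 4$ actually arises one derivative earlier: a short computation gives
\begin{align*}
f'(q) \;=\; -\frac{p}{q} + \frac{1-p}{1-q} - 4(q-p) \;=\; (q-p)\left(\frac{1}{q(1-q)} - 4\right),
\end{align*}
and since $q(1-q) \le \tfrac14$ the bracketed factor is nonnegative, so $f'$ has the sign of $q-p$. Hence $f$ decreases on $(0,p)$ and increases on $(p,1)$, attains its minimum value $f(p)=0$ at $q=p$, and is therefore nonnegative --- which is exactly the conclusion you need. With that one step repaired (and your boundary conventions retained), the data-processing reduction and the supremum over $A$ go through and the proof is complete.
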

\begin{lemma} \label{lemma:int_diff}
   	Let $(\Omega,\mathcal{F})$ be a measurable space and let $P,Q:\mathcal{F}\rightarrow[0,1]$ be probability measures. Let $a<b$ and $X:\Omega\rightarrow[a,b]$ be a $\mathcal{F}$-measurable random variable, we have
   	\begin{align}
   		\left|\int_\Omega X(\omega) dP(\omega)-\int_\Omega X(\omega) dQ(\omega)\right|\leq (b-a)\delta (P,Q),
   	\end{align}
   	where $\delta(P,Q)$ is as defined in Lemma~\ref{lemma:pinsker}.
\end{lemma}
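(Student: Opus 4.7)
The plan is to reduce the statement to the case $X \in [0,1]$ and then apply the layer-cake representation together with the definition of $\delta(P,Q)$. Write $Y(\omega) = (X(\omega) - a)/(b-a)$, so that $Y$ is an $\mathcal{F}$-measurable random variable taking values in $[0,1]$. Linearity of the integral gives
\begin{equation*}
\int_\Omega X \, dP - \int_\Omega X \, dQ = (b-a)\left(\int_\Omega Y \, dP - \int_\Omega Y \, dQ\right),
\end{equation*}
so it suffices to prove $\bigl|\int Y\,dP - \int Y\,dQ\bigr| \le \delta(P,Q)$.

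Next I would invoke the standard layer-cake identity for non-negative random variables: since $Y \ge 0$ and $Y \le 1$,
\begin{equation*}
\int_\Omega Y \, dP = \int_0^1 P(Y > t)\, dt, \qquad \int_\Omega Y \, dQ = \int_0^1 Q(Y > t)\, dt,
\end{equation*}
(the upper limit can be taken to be $1$ because $P(Y > t) = Q(Y > t) = 0$ for $t \ge 1$). Subtracting and using linearity,
\begin{equation*}
\int_\Omega Y \, dP - \int_\Omega Y \, dQ = \int_0^1 \bigl[P(Y > t) - Q(Y > t)\bigr]\, dt.
\end{equation*}

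For each $t \in [0,1]$, the set $\{Y > t\}$ lies in $\mathcal{F}$ because $Y$ is measurable, and thus by the definition of $\delta(P,Q)$ from Lemma~\ref{lemma:pinsker},
\begin{equation*}
P(Y > t) - Q(Y > t) \le \delta(P,Q).
\end{equation*}
Applying the same bound with the roles of $P$ and $Q$ swapped yields $|P(Y > t) - Q(Y > t)| \le \delta(P,Q)$, and integrating this uniform bound over $[0,1]$ gives $|\int Y\,dP - \int Y\,dQ| \le \delta(P,Q)$. Multiplying back by $b-a$ recovers the claimed inequality.

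The argument is essentially routine, and I do not expect a genuine obstacle. The one subtlety worth noting is that $\delta(P,Q)$ as defined takes a one-sided supremum $\sup_A [P(A)-Q(A)]$ rather than $\sup_A |P(A)-Q(A)|$, so an explicit symmetry step is needed to control the absolute value of the difference of integrals; once both signs are handled, the rest is just the layer-cake formula and a constant-in-$t$ integrand. An alternative route, via the Hahn-Jordan decomposition of the signed measure $P - Q$ and the bounds $a \le X \le b$ on the positive and negative parts, would give the same conclusion but is somewhat heavier and less transparent than the layer-cake reduction.
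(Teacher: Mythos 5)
Your proof is correct: the normalization to $[0,1]$, the layer-cake identity, and the pointwise bound $|P(Y>t)-Q(Y>t)|\leq\delta(P,Q)$ (with the complementation step to handle the one-sided definition of $\delta$) together give the claim. The paper does not actually prove this lemma — it is quoted from \citet{lattimore2018bandit} — and your argument is essentially the standard one given there, so there is nothing to reconcile.
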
 

Define $\mu^{(l)} = \{\mu_{\mathrm{do}(X_1=1)}^{(l)},\ldots,\mu_{\mathrm{do}(X_1=K)}^{(l)},\ldots,\mu_{\mathrm{do}(X_n=1)}^{(l)},\ldots\mu_{\mathrm{do}(X_n=K)}^{(l)}\}\in\mathbb{R}^{nK}$ as the reward mean vector for the $l$-th bandit instance in above. By construction, only the $l$-th element of $\mu^{(l)}$ is $\Delta$, and all other elements are zeros. For any policy $\pi$, we define $\mathbb{P}_l$ and $\mathbb{E}_l$ as the probability measure and expectation induced by bandit instance $l$ within $T$ rounds. 

Denote the number of plays on arm $i$ up to time $T$ by $T_i(T)$, we have 
\begin{align*}
	\mathbb{E}_i\left[T_i(T)\right] & \leq \mathbb{E}_0\left[T_i(T)\right]+T\delta(\mathbb{P}_0,\mathbb{P}_i) \text{ by Lemma~\ref{lemma:int_diff}}\\
	&\leq \mathbb{E}_0\left[T_i(T)\right]+T\sqrt{\frac{1}{2}\mathbf{KL}(\mathbb{P}_0,\mathbb{P}_i)} \text{ by Lemma~\ref{lemma:pinsker}}\\
	&=\mathbb{E}_0\left[T_i(T)\right]+T\sqrt{\frac{1}{2}\cdot\frac{1}{2}\Delta^2\mathbb{E}_0\left[T_i(T)\right]} \text{ by Lemma~\ref{lemma:decomp}}\\
	& = \mathbb{E}_0\left[T_i(T)\right]+\frac{1}{8}\sqrt{nKT\mathbb{E}_0\left[T_i(T)\right]} \text{ by $\Delta = \frac{1}{4}\sqrt{\frac{nK}{T}}$}.
\end{align*}

Sum over the left term in above, using the property $\sum_{i=1}^{nK}\mathbb{E}_0\left[T_i(T)\right] = T$, we have
\begin{align*}
	\sum_{i=1}^{nK} \mathbb{E}_i\left[T_i(T)\right] &\leq T+\frac{\sqrt{nKT}}{8}\sum_{i=1}^{nK}\sqrt{\mathbb{E}_0\left[T_i(T)\right]}\\
	&\leq T+\frac{1}{8}nKT.
\end{align*}

Let $R_i\triangleq R_T(\pi,\nu_i)$ (the regret of applying policy $\pi$ on the $i-$th bandit instance up to time $T$), where $\nu_i$ refers to the $i$-th bandit instance in above construction.
\begin{align*}
	\sum_{i=1}^{nK}\mathbb{E}[R_i] &= \Delta \sum_{i=1}^{nK}\left(T-\mathbb{E}_i\left[T_i(T)\right]\right)\\
	&\geq \frac{1}{4}\sqrt{\frac{nK}{T}}(nKT-T-\frac{1}{8}nKT) \geq \frac{nKT}{8}\sqrt{\frac{nK}{T}} = \frac{nK}{8}\sqrt{nKT}.
\end{align*}
Thus, there exists one bandit instance $i^*$, such that $\mathbb{E}[R_{i^*}]\geq\frac{1}{8}\sqrt{nKT}$.
\end{proof}

\subsection{Proof for Theorem~\ref{thm:lower_reward_gap}}
\begin{proof}
Consider $(K+2)$-nary variables $X_1,\ldots,X_n\in\{0,1,\ldots,K+1\}$ and action set $\mathcal{A} = \{\text{do}(X_i=x)\mid x\in\{2,\ldots,K+1\}; i=1,\ldots,n\}$. 
Note that $|\mathcal{A}| = nK$.

We construct below $nK+1$ bandit instances such that Assumption~\ref{assump:bdd} holds but Assumption~\ref{assump:reward_gap} does not hold.

\paragraph{Bandit Instance $0$:}
\begin{itemize}
	\item Causal structure $X_n\rightarrow X_{n-1}\rightarrow\cdots\rightarrow X_1$.
	\item Probabilities assigned: 
	$\mathbb{P}(X_n=0) = 1$; $\mathbb{P}(X_{i-1}=0|X_i = 0) = \mathbb{P}(X_{i-1}=1|X_i=1) = 1$; $P(X_{i-1}\geq 2|X_i) = 0$ for any value of $X_i$. ($i = 2,\ldots, n$)
	\item Reward generation: $R\sim N(0,1)$.
	\item Note: for all $nK$ action, their reward mean is $0$.
\end{itemize}

\paragraph{Bandit Instance $k$, where $k=1,\ldots,K$:}
\begin{itemize}
	\item Causal structure $X_n\rightarrow X_{n-1}\rightarrow\cdots\rightarrow X_1$.
	\item Probabilities assigned: 
	$\mathbb{P}(X_n=0) = 1$; $\mathbb{P}(X_{i-1}=0|X_i = 0) = \mathbb{P}(X_{i-1}=1|X_i=1) = 1$; $P(X_{i-1}\geq 2|X_i) = 0$ for any value of $X_i$. ($i = 2,\ldots, n$)
	\item Reward generation: $R\sim N(\Delta\mathbb{1}_{\{X_1=k+1\}},1)$.
	\item Note: only action $\text{do}(X_1=k+1)$ has reward mean $\Delta$, otherwise $0$. 
\end{itemize}

\paragraph{Bandit Instance $(j-1)K+k\sim jK$, where $j=2,\ldots,n-1, k=1,\ldots,K$:}
\begin{itemize}
    \item Causal structure $X_{j-1}\rightarrow\cdots\rightarrow X_{1}\rightarrow X_n\rightarrow X_{n-1}\rightarrow\cdots\rightarrow X_j$.
    \item Probabilities assigned: $\mathbb{P}(X_{j-1}=0) = 1$, for other directly connected variable pairs $X\rightarrow Y$ in the graph we have $\mathbb{P}(Y=0|X=0) = \mathbb{P}(Y=1|X=1) = 1$ and $\mathbb{P}(Y\geq 2|X) = 0$ for any value of $X$.
	\item Reward generation: $R\sim N(\Delta\mathbb{1}_{\{X_j=k+1\}},1)$.
	\item Note: only action $\text{do}(X_j=k+1)$ has reward mean $\Delta$, otherwise $0$. 
\end{itemize}

\paragraph{Bandit Instance $(n-1)K+k\sim nK$, where $k=1,\ldots,K$:}
\begin{itemize}
	\item Causal structure $X_{n-1}\rightarrow\cdots\rightarrow X_1\rightarrow X_n$.
	\item Probabilities assigned: $\mathbb{P}(X_{n-1}=0) = 1$, for other directly connected variable pairs $X\rightarrow Y$ in the graph we have $\mathbb{P}(Y=0|X=0) = \mathbb{P}(Y=1|X=1) = 1$ and $\mathbb{P}(Y\geq 2|X) = 0$ for any value of $X$.
	\item Reward generation: $R\sim N(\Delta\mathbb{1}_{\{X_n=k+1\}},1)$.
	\item Note: only action $\text{do}(X_n=k+1)$ has reward mean $\Delta$, otherwise $0$.
\end{itemize}

In above $nK+1$ instances, we see that Assumption~\ref{assump:bdd} holds since $P(Y = 1) = 0$ but $P(Y = 1|X=1) = 1$ for all directly connected variable pairs $X\rightarrow Y$.
However, Assumption~\ref{assump:reward_gap} does not hold.
For example in bandit instance $1$, no matter how we intervene on $X_n,\ldots,X_2$, by probabilities construction $X_1\geq 2$ can never happen, which means the expected reward is always zero unless we direcly intervene on $X_1$.
Similarly, Assumption~\ref{assump:reward_gap} does not hold for other bandit instances as well.

Follow the exact proof for Theorem~\ref{thm:lower_bdd}, we know that there exists one bandit instance $\nu$ in above such that 
\begin{align}
    \mathbb{E}[R_T(\pi, \nu)]  = \Omega(\sqrt{nKT}),
\end{align}
for any policy $\pi$.
\end{proof}

\section{Algorithms}
\label{sec:appendix_algo}
We present our algorithms in this section.

\begin{algorithm}[htbp]
	\centering
	\caption{Find Sub-tree}
	\begin{algorithmic}[1]
		\STATE \textbf{Input: } tree skeleton $\mathcal{T}_0$, $K$, $B$, $\varepsilon$, $\Delta$, $\hat{R}$.
		\STATE \textbf{Initialize: } $t\leftarrow 0, q_0(X)\leftarrow 1/n$ for all $X\in\mathcal{X}$, $\text{found}\leftarrow\text{False}$ (indicating whether an ancestor of $X_R$ is found or not).
		\WHILE{found is False}
		\STATE Identify central node $v_c(t)\leftarrow$ Find Central Node($\mathcal{T}_0, q_t(\cdot)$). \hfill\textcolor{blue}{//Algorithm~\ref{algo:find_cnode} in Section~\ref{sec:appendix_algo}.}
		\STATE Initialize the edge direction between $Y$ and $v_c(t)$ as $Y\rightarrow v_c(t)$ by:\\
		$\text{direction}(Y)\leftarrow \text{up}$, for all $Y\in N_{\mathcal{T}_0}(v_c(t))$.
		\FOR{$z \in[K]$}
		\STATE Perform $a = \text{do}(v_c(t) = z)$ for $B$ times, \\
		collect interventional samples $Y_1^a,\ldots, Y_B^a$ for $Y\in N_{\mathcal{T}_0}(v_c(t))$ and $R_1^a,\ldots, R_B^a$.
		\STATE Estimate interventional probabilities $\hat{P}(Y = y\mid a) \leftarrow \frac{1}{B}\sum_{b=1}^{B}\mathbb{1}_{\{Y_b^a = y\}}$ for $y\in[K]$.
		\IF{$|\hat{R}-\frac{1}{B}\sum_{b=1}^{B}R_b^a| > \Delta/2$}
		\STATE $\text{found}\leftarrow\text{True}$. \hfill\textcolor{blue}{//by Assumption~\ref{assump:reward_gap}, $v_c(t)$ is an ancestor of $X_R$.}
		\ENDIF
		\FOR{$Y\in N_{\mathcal{T}_0}(v_c(t))$}
		\IF{$|P(Y = y) - \hat{P}(Y = y\mid a)|>\varepsilon/2$ for some $y\in[K]$}
		\STATE Update the edge direction between $Y$ and $v_c(t)$ as $v_c(t)\rightarrow Y$ by:\\
		$\text{direction}(Y)\leftarrow\text{down}$. \hfill\textcolor{blue}{//by Assumption~\ref{assump:bdd}, the edge direction is updated as $v_c(t)\rightarrow Y$.}
		\ENDIF
		\ENDFOR
		\ENDFOR
		\IF{found is True}
		\RETURN $\widetilde{\mathcal{T}}_0$ induced by root $v_c(t)$. \hfill\textcolor{blue}{//cut off upstream branches using direction($\cdot$) results.}
		\ENDIF
		\FOR{$Y\in N_{\mathcal{T}_0}(v_c(t))$}
		\IF{$\text{direction}(Y) = \text{down}$}
		\STATE $q_{t+1}(X)\leftarrow 0$, for $X\in B_{\mathcal{T}_0}^{v_c(t):Y}\cup\{v_c(t)\}$. \hfill\textcolor{blue}{//variables that cannot be an ancestor of $X_R$.}
		\ELSE
		\STATE $q_{t+1}(X)\leftarrow 1$, for $X\in B_{\mathcal{T}_0}^{v_c(t):Y}$. \hfill\textcolor{blue}{//variables that might be an ancestor of $X_R$.}
		\ENDIF
		\ENDFOR
		\STATE normalize $q_{t+1}(\cdot)$, $t\leftarrow t+1$.
		\ENDWHILE
	\end{algorithmic}
	\label{algo:find_subtree}
\end{algorithm}

\begin{algorithm}[htbp]
	\centering
	\caption{Find Key Node}
	\begin{algorithmic}[1]
		\STATE \textbf{Input: } directed sub-tree $\widetilde{\mathcal{T}}_0$, $K$, $B$, $\Delta,\hat{R}$.
		\STATE \textbf{Initialize: } $t\leftarrow 0, q_0(X)\leftarrow 1/|\widetilde{\mathcal{T}}_0|$ for all $X\in V(\mathcal{T}_0)$.
		\WHILE{$q_t(X)>0$ for more than one $X\in V(\widetilde{\mathcal{T}}_0)$}
		\STATE Identify central node $v_c(t)\leftarrow$ Find Central Node ($\widetilde{\mathcal{T}}_0$, $q_t(\cdot)$).
		\hfill\textcolor{blue}{//Algorithm~\ref{algo:find_cnode} in Section~\ref{sec:appendix_algo}.}
		\STATE Initialize $\text{direction}\leftarrow\text{up}$.
		\hfill\textcolor{blue}{//Indicating whether the true $X_R$ is towards the upstream direction of $v_c(t)$ or downstream direction of $v_c(t)$.}
		\FOR{$z\in[K]$}
		\STATE Perform $a=\text{do}(v_c(t)=z)$ for $B$ times, collect samples $R_1^a,\ldots,R_B^a$.
		\IF{$|\hat{R}-\frac{1}{B}\sum_{b=1}^{B}R_b^a|>\Delta/2$}
		\STATE $\text{direction}\leftarrow\text{down}$.
		\hfill\textcolor{blue}{//by Assumption~\ref{assump:reward_gap}.}
		\ENDIF
		\ENDFOR
		\IF{$\text{direction} = \text{up}$}
		\STATE $q_{t+1}(X)\leftarrow 1$ for $X\in B_{\widetilde{\mathcal{T}}_0}^{v_c(t):\text{Pa}_{\widetilde{\mathcal{T}}_0}(v_c(t))}$. \hfill\textcolor{blue}{//variables that might be an ancestor of $X_R$.}\\
		$q_{t+1}(v_c(t))\leftarrow 0$. \hfill\textcolor{blue}{//$v_c(t)$ cannot be an ancestor of $X_R$ if direction is up.}
		\FOR{$Y\in \text{Ch}_{\widetilde{\mathcal{T}}_0}(v_c(t))$}
		\STATE $q_{t+1}(X)\leftarrow 0$, for $X\in B_{\widetilde{\mathcal{T}}_0}^{v_c(t):Y}$.
		\hfill\textcolor{blue}{//variables that cannot be an ancestor of $X_R$.}
		\ENDFOR
		\ELSE
		\STATE Initialize $\text{RewardBranch}\leftarrow\text{None}$. \hfill\textcolor{blue}{//indicating the branch pointing from $v_c(t)$ that contains the true $X_R$.}
		\FOR{$Y\in \text{Ch}_{\widetilde{\mathcal{T}}_0}(v_c(t))$}
		\FOR{$y\in\text{Dom}(Y)$}
		\STATE Perform $a = \text{do}(Y=y)$ for $B$ times, collect $R_1^a, \ldots, R_B^a$.
		\IF{$|\hat{R} - \frac{1}{B}\sum_{b=1}^{B}R_b^a|>\Delta/2$}
		\STATE $\text{RewardBranch}\leftarrow B_{\widetilde{\mathcal{T}_0}}^{v_c(t):Y}$.
		\hfill\textcolor{blue}{by Assumption~\ref{assump:reward_gap}.}
		\ENDIF
		\ENDFOR
		\ENDFOR
		\IF{$\text{RewardBranch}$ is None.}
		\RETURN $v_c(t)$. \hfill\textcolor{blue}{//$v_c(t)$ is an ancestor of $X_R$ but none of its children is, thus, $v_c(t)$ itself is $X_R$.}
		\ENDIF
		\STATE $q_{t+1}(X)\leftarrow 0$ for $X\notin \text{RewardBranch}$ and $q_{t+1}(X)\leftarrow 1$ for $X\in \text{RewardBranch}$.
		\hfill \textcolor{blue}{//only the variables in RewardBranch might be $X_R$.}
		\ENDIF
		\STATE normalize $q_{t+1}(\cdot)$, $t\leftarrow t+1$.
		\ENDWHILE
	\end{algorithmic}
	\label{algo:find_key_node}
\end{algorithm}

\begin{algorithm}[htbp]
	\centering
	\caption{CN-UCB for Causal Forest}
	\begin{algorithmic}[1]
		\STATE \textbf{Input: }essential graph $\mathcal{E}(D)$, $K, \varepsilon, \Delta, B, T_2$, observational probabilities $P(\mathcal{X})$.
		\STATE Perform $\text{do}()$ for $B$ times, collect $R_1, \ldots, R_B$, $\hat{R}\leftarrow\frac{1}{B}\sum_{b=1}^B R_b$.
		\FOR{$\mathcal{T}_0$ in $\text{CC}(\mathcal{E}(D))$}
		\STATE \textbf{Stage 1:} Find a directed subtree that contains $X_R$: $\widetilde{\mathcal{T}}_0\leftarrow$ Find Sub-tree($\mathcal{T}_0, K, B, \varepsilon, \Delta, \hat{R}$).\\
		\hfill \textcolor{blue}{//call Algorithm~\ref{algo:find_subtree}.}
		\IF{$\widetilde{\mathcal{T}}_0$ is not empty}
		\STATE \textbf{Stage 2:} Find the key node that generates the reward: $X_R\leftarrow$ Find Key Node($\widetilde{\mathcal{T}}_0, K, B, \Delta,\hat{R}$).
		\hfill \textcolor{blue}{//call Algorithm~\ref{algo:find_key_node}.}
		\STATE \textbf{Stage 3: }Apply UCB algorithm on $\mathcal{A}_R = \{\text{do}(X_R = k)\mid k=1,\ldots,K\}$ for $T_2$ rounds.
		\ENDIF
		\ENDFOR
	\end{algorithmic}
\label{algo:forest}
\end{algorithm}

\begin{algorithm}[htbp]
	\caption{Find Central Node}
	\label{algo:find_cnode}
	\textbf{Input: }Undirected tree $\mathcal{T}$ with some distribution $q$ over the nodes $i=1,\ldots,n$.
	\begin{algorithmic}[1]
		\STATE Choose a node $v$ from $X_1,\ldots,X_n$. Find neighbors $N_\mathcal{T}(v)$.
		\WHILE{$\max_{j\in N_\mathcal{T}} q(B_{\mathcal{T}}^{v_c:X_j})\geq 1/2$}
		\STATE $v\leftarrow\argmax_{j\in N_\mathcal{T}} q(B_{\mathcal{T}}^{v_c:X_j})$
		\ENDWHILE
	\end{algorithmic}
	\textbf{Output: }Central node $v_c = v$.
\end{algorithm}

\begin{algorithm}[htbp]
	\centering
	\caption{Find Sub-junction-tree}
	\begin{algorithmic}[1]
		\STATE \textbf{Input: }graph $G$, junction tree $\mathcal{T}_G, K, B, \varepsilon, \Delta, \hat{R}$.
		\STATE \textbf{Initialize: }$t\leftarrow 0, q_0(C)\leftarrow1/|\mathcal{T}_G|$ for all $C\in\mathcal{T}_G$, $\text{found}\leftarrow\text{False}$ (indicating whether an ancestor of $X_R$ is found or not).
		\WHILE{found is False}
		\STATE Identify central node $C_c(t)\leftarrow$Find Central Node($\mathcal{T}_G, q_t(\cdot)$). 
		\hfill \textcolor{blue}{//call Algorithm~\ref{algo:find_cnode}.}
		\STATE Initialize the edge direction between $C_Y$ and $C_c(t)$ as $C_Y\rightarrow C_c(t)$ by:\\ $\text{direction}(C_Y)\leftarrow\text{up}$, for all $C_Y\in N_{\mathcal{T}_G}(C_c(t))$.
		\STATE Clique Intervention ($G, C_c(t), B$) with collected interventional samples: $R_b^{\mathrm{do}(Z=z)}$, $Y_b^{\mathrm{do}(Z=z)}$ for $Z\in V(C_c(t)), z\in[K], Y\in V(G)$, $b=1,\ldots,B$. \hfill \textcolor{blue}{//call Algorithm~\ref{algo:clique_intervention}.}
		\STATE $\hat{P}(Y=y|\text{do}(Z=z))\leftarrow\frac{1}{B}\sum_{b=1}^{B}\mathbb{1}_{\{Y_b^{\text{do}(Z=z)}=y\}}$ for $Z\in C_c(t), z\in[K]$, $Y\in V(G)\setminus \{Z\}, y\in[K]$. 
		\IF{$|\hat{R} - \frac{1}{B}\sum_{b=1}^{B}R_b^{\text{do}(Z=z)}|>\Delta/2$ for some $Z\in C_c(t), z\in[K]$}
		\STATE $\text{found}\leftarrow\text{True}$. \hfill \textcolor{blue}{//by Assumption~\ref{assump:reward_gap}.}
		\ENDIF
		\FOR{$C_Y\in V(N_{\mathcal{T}_G}(C_c(t)))$}
		\IF{$\forall Z\in C_Y\cap C_c(t), \forall Y\in C_Y\setminus C_c(t)$, $\exists z,y$, s.t. $|P(Y=y)-\hat{P}(Y=y\mid\text{do}(Z=z))|>\varepsilon/2$}
		\STATE $\text{direction}(C_Y)\leftarrow\text{down}$. \hfill\textcolor{blue}{//by Assumption~\ref{assump:bdd} and Definition~\ref{def:DCT}, the edge direction is updated as $C_c(t)\rightarrow C_Y$.}
		\ENDIF
		\ENDFOR
		\IF{found is True}
		\RETURN sub-junction-tree $\widetilde{\mathcal{T}}_{G}$ induced by $\{C\mid C\notin B_{\mathcal{T}_G}^{C_c(t):C_Y}, \text{where direction}(C_Y) = \text{up}\}$
		\ENDIF
		\FOR{$C_Y\in N_{\mathcal{T}_G}(C_c(t))$}
		\IF{$\text{direction}(C_Y)=\text{down}$}
		\STATE $q_{t+1}(X)\leftarrow 0$ for $X\in B_{\mathcal{T}_G}^{C_c(t):C_Y}\cup \{C_c(t)\}$. \hfill\textcolor{blue}{//cliques that cannot contain $X_R$.}
		\ELSE
		\STATE $q_{t+1}(X)\leftarrow 1$ for $X\in B_{\mathcal{T}_G}^{C_c(t):C_Y}$.\hfill\textcolor{blue}{//cliques that may contain $X_R$.}
		\ENDIF
		\ENDFOR
		\STATE normalize $q_{t+1}(\cdot)$, $t\leftarrow t+1$.
		\ENDWHILE
	\end{algorithmic}
	\label{algo:find_sub_junction_tree}
\end{algorithm}

\begin{algorithm}[htbp]
	\centering
	\caption{Find Key Clique}
	\begin{algorithmic}[1]
		\STATE \textbf{Input: } graph $G$, directed sub-junction tree $\widetilde{\mathcal{T}}_{G}, K, B, \Delta, \hat{R}$.
		\STATE \textbf{Initialize: }$t\leftarrow 0, q_0(C)\leftarrow 1/|\widetilde{\mathcal{T}}_{G}|$ for clique $C$ in $\widetilde{\mathcal{T}}_{G}$.
		\WHILE{$q_t(C)>0$ for more than one clique $C$ in $\mathcal{T}_{G_0}$}
		\STATE Identify central clique $C_c(t)\leftarrow$Find Central Node($\widetilde{\mathcal{T}}_{G}, q_t(\cdot)$). \hfill\textcolor{blue}{//call Algorithm~\ref{algo:find_cnode}.}
		\STATE Initialize $\text{direction}\leftarrow\text{up}$.
		\hfill\textcolor{blue}{//Indicating whether the true $X_R$ is towards the upstream direction of $C_c(t)$ (not include $C_c(t)$) or downstream direction of $C_c(t)$ (include $C_c(t)$).}
		\STATE Clique Intervention($G, C_c(t), B$) and collect reward data $R_1^{\text{do(Z=z)}},\ldots,R_B^{\text{do(Z=z)}}$, for $Z\in C_c(t), z\in[K]$. \hfill \textcolor{blue}{//call Algorithm~\ref{algo:clique_intervention}.}
		\IF{$|\hat{R} - \frac{1}{B}\sum_{b=1}^{B}R_b^{\text{do}(Z=z)}|>\Delta/2$ for some $Z\in C_c(t),z\in[K]$}
		\STATE $\text{directon}\leftarrow\text{down}$. \hfill\textcolor{blue}{//by Assumption~\ref{assump:reward_gap}.}
		\ENDIF
		\IF{direction $=$ up}
		\STATE $q_{t+1}(C)\leftarrow 0$, for clique $C\in B_{\widetilde{\mathcal{T}}_{G}}^{C_c(t):C_Y}$ if $C_Y$ satisfies: $C_c(t)\rightarrow C_Y$ in $\widetilde{\mathcal{T}}_{G}$.
		\STATE $q_{t+1}(C)\leftarrow 1$, for the remaining cliques.
		\ELSE
		\STATE $\text{RewardBranch}\leftarrow \text{None}$.
		\FOR{$C_Y\in\text{Ch}_{\widetilde{\mathcal{T}}_{G}}(C_c(t))$}
		\STATE Clique Intervention($G, C_Y, B$) and collect reward data $R_1^{\text{do(Y=y)}},\ldots,R_B^{\text{do(Y=y)}}$, for $Y\in V(C_Y), y\in[K]$. \hfill\textcolor{blue}{//call Algorithm~\ref{algo:clique_intervention}.}
		\IF{$|\hat{R} - \frac{1}{B}\sum_{b=1}^{B}R_b^{\text{do}(Y=y)}|>\Delta/2$ for some $Y\in V(C_Y), y\in[K]$}
		\STATE $\text{RewardBranch}\leftarrow B_{\widetilde{\mathcal{T}}_{G}}^{C_c(t):C_Y}$.
		\STATE \textbf{Break}
		\ENDIF
		\ENDFOR
		\IF{RewardBranch is None}
		\RETURN $C_c(t)$. \hfill\textcolor{blue}{//One of the variables in $C_c(t)$ is an ancestor of $X_R$, but none of $C_c(t)$'s children contains an ancestor of $X_R$. Thus, $C_c(t)$ contains $X_R$.}
		\ELSE
		\STATE $q_{t+1}(C)\leftarrow 1$, for $C\in \text{RewardBranch}$.
		\hfill\textcolor{blue}{//only cliques in RewardBranch may contain $X_R$.}
		\STATE $q_{t+1}(C)\leftarrow 0$, for $C\notin \text{RewardBranch}$.\hfill\textcolor{blue}{//cliques not in RewardBranch do not contain $X_R$.}
		\ENDIF
		\ENDIF
		\STATE normalize $q_{t+1}(\cdot)$, $t\leftarrow t+1$.
		\ENDWHILE
	\end{algorithmic}
	\label{algo:find_clique}
\end{algorithm}

\begin{algorithm}[htbp]
    \centering
    \caption{Clique Intervention}
    \begin{algorithmic}[1]
    \STATE \textbf{Input: } Graph $G$, Clique $C$, $B$.
    \FOR{$Z\in V(C)$}
    \FOR{$k=1,\ldots,K$}
    \FOR{$b=1,\ldots,B$}
    \STATE Perform intervention $\mathrm{do}(Z=k)$. 
    \STATE Collect interventional data for reward and other variables on the graph: $R_b^{\mathrm{do}(Z=k)}$, $Y_b^{\mathrm{do}(Z=k)}$ for $Y\in V(G)$.
    \ENDFOR
    \ENDFOR
    \ENDFOR
    \end{algorithmic}
    \label{algo:clique_intervention}
\end{algorithm}

\section{Discussion on Multiple Reward Generating Variables}
\label{sec:app_multi}
In this section, we discuss how to generalize Algorithm~\ref{algo:general_causal_tree} to the setting where multiple reward generating variables exist.

In this setting, we will have to generalize Assumption~\ref{assump:reward_gap} to hold on interventions on any ancestor of each of the direct causes. 
That is to say for any variable $X$ that is an ancestor of certain direct cause of the reward, we have $|\mathbb{E}\left[R\mid\text{do}(X=x)-\mathbb{E}[R\mid\text{do}()]\right]| > \Delta$ for some $x$ in the domain of $X$.
Then our approach can be generalized to this setting by running multiple times. 
Specifically, in Algorithm~\ref{algo:general_causal_tree}, we don’t stop after stage 2 finds a reward generating variable, say $X_{R_1}$ because it is only one of the direct causes. 
By construction of stage 2 (Algorithm~\ref{algo:find_key_node}), we know except for $X_{R_1}$ itself, none of its descendants is a direct cause of the reward (we always check children first). 
We can fix the values of all variables in the subtree $\mathcal{T}_1$ induced by $X_{R_1}$ as the root.
Then we re-run Algorithm~\ref{algo:general_causal_tree} on the remaining graph (still a tree): original tree cut by subtree $\mathcal{T}_1$, and find a second direct cause of the reward. 
This procedure can be repeated until no further reward-generating variable can be found. 

Above idea is a way to find multiple reward generating variables, but it’s also possible to come up with more efficient ways. 
For example, instead of finding the direct causes one by one, it will be interesting to develop methods that can find several of them simultaneously. 
We think this setting itself is also interesting and worth studying as an independent work.
	
\end{document}